\def\eqref#1{equation~\ref{#1}}
\def\1{\bm{1}}
\DeclareMathAlphabet{\mathsfit}{\encodingdefault}{\sfdefault}{m}{sl}
\SetMathAlphabet{\mathsfit}{bold}{\encodingdefault}{\sfdefault}{bx}{n}
\theoremstyle{plain}
\newtheorem{theorem}{Theorem}[section]
\newtheorem{proposition}[theorem]{Proposition}
\newtheorem{lemma}[theorem]{Lemma}
\theoremstyle{definition}
\theoremstyle{remark}
\newcommand{\highlight}[1]{{\color{blue}#1}}
\newcommand{\cA}{{\mathcal{A}}}
\newcommand{\cD}{{\mathcal{D}}}
\newcommand{\cS}{{\mathcal{S}}}
\newcommand{\cM}{{\mathcal{M}}}
\newcommand{\cL}{{\mathcal{L}}}
\newcommand{\cN}{{\mathcal{N}}}
\newcommand{\cO}{{\mathcal{O}}}
\newcommand{\cZ}{{\mathcal{Z}}}
\newcommand{\bX}{\textbf{X}}
\newcommand{\bs}{\textbf{s}}
\newcommand{\bq}{\textbf{q}}
\newcommand{\bA}{\textbf{A}}
\newcommand{\ba}{\textbf{a}}
\newcommand{\bo}{\textbf{o}}
\newcommand{\bpi}{\pmb{\pi}}
\newcommand{\bnu}{\pmb{\nu}}
\newcommand{\bbR}{\mathbb{R}}
\newcommand{\bbE}{\mathbb{E}}
\newcommand{\dtot}{\rho^{\pmb{\pi}_{tot}}}
\newcommand{\dmutot}{\rho^{\mu_{tot}}}
\newcommand{\squishend}{
  \end{list}  }
\newif\ifnotes\notestrue
\def\cmt#1{{\color{blue}{#1}}}
\def\htien#1{}
\newcommand{\red}[1]{{\textbf{#1}}}
\newif\ifnotes\notestrue
\def\htien#1{}
\newcommand{\showinstance}[4][0.25]{
\begin{minipage}{#1\textwidth}
    \centering
    \includegraphics[width=1.0\textwidth,trim={#2 0 0 0},clip]{graphs/#3.pdf}
    \ifx #4\empty \else \parbox{\linewidth}{\centering #4} \fi
    \captionsetup{justification=centering}
\end{minipage}
}
\newcommand{\showlegend}[3][0.25]{
\begin{minipage}{#1\textwidth}
    \centering
    \includegraphics[width=1.0\textwidth,trim={#2cm #3cm 0 0},clip]{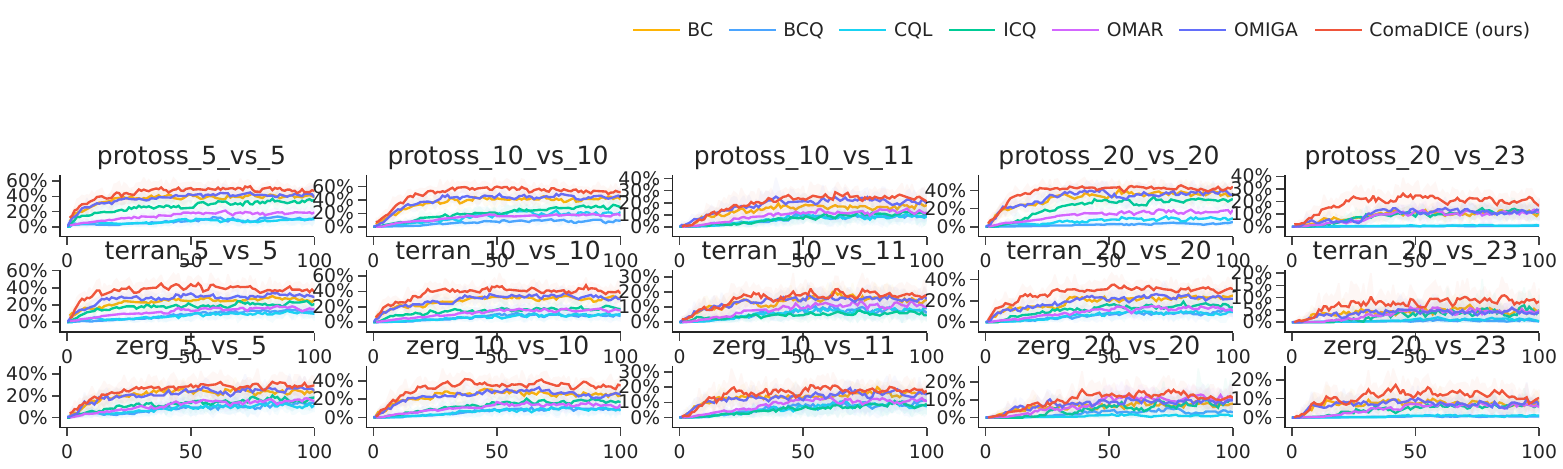}
\end{minipage}
}
\title{ComaDICE: Offline Cooperative Multi-Agent Reinforcement Learning with Stationary Distribution Shift Regularization}
\author{The Viet Bui \\
Singapore Management University, Singapore\\
\texttt{theviet.bui.2023@phdcs.smu.edu.sg}
\And
Thanh Hong Nguyen\\
University of Oregon Eugene, Oregon, United States\\ 
\texttt{thanhhng@cs.uoregon.edu}
\AND
Tien Mai \\
Singapore Management University, Singapore\\
\texttt{atmai@smu.edu.sg}
}
\begin{document}

\addtocontents{toc}{\protect\setcounter{tocdepth}{0}}
\maketitle

\begin{abstract}
Offline reinforcement learning (RL) has garnered significant attention for its ability to learn effective policies from pre-collected datasets without the need for further environmental interactions. While promising results have been demonstrated in single-agent settings, offline multi-agent reinforcement learning (MARL) presents additional challenges due to the large joint state-action space and the complexity of multi-agent behaviors. A key issue in offline RL is the \textit{distributional shift}, which arises when the target policy being optimized deviates from the behavior policy that generated the data. This problem is exacerbated in MARL due to the interdependence between agents' local policies and the expansive joint state-action space. Prior approaches have primarily addressed this challenge by incorporating regularization in the space of either Q-functions or policies. In this work, we introduce a regularizer in the space of stationary distributions to better handle distributional shift. Our algorithm, ComaDICE, offers a principled framework for offline cooperative MARL by incorporating stationary distribution regularization for the global learning policy, complemented by a carefully structured multi-agent value decomposition strategy to facilitate  multi-agent training. Through extensive experiments on the multi-agent \textit{MuJoCo} and \textit{StarCraft II}  benchmarks, we demonstrate that ComaDICE achieves superior performance compared to state-of-the-art offline MARL methods across nearly all tasks.

\end{abstract}

\section{Introduction}
Over the years, deep RL has achieved remarkable success in various decision-making tasks~\citep{levine2016end,silver2017mastering,kalashnikov2018scalable,haydari2020deep}. However, a significant limitation of deep RL is its need for millions of interactions with the environment to gather experiences for policy improvement. This process can be both costly and risky, especially in real-world applications like robotics and healthcare.
To address this challenge, offline RL has emerged, enabling policy learning based solely on pre-collected demonstrations~\citep{levine2020offline}. Despite this advancement, offline RL faces a critical issue: the distribution shift between the offline dataset and the learned policy~\citep{kumar2019stabilizing}. This distribution shift complicates value estimation for unseen states and actions during policy evaluation, resulting in {extrapolation errors} where out-of-distribution (OOD) state-action pairs are assigned unrealistic values~\citep{fujimoto2018addressing}. 

To tackle OOD actions, many existing works impose action-level constraints, either implicitly by regulating the learned value functions or explicitly through distance or divergence penalties~\citep{fujimoto2019off,kumar2019stabilizing,wu2019behavior,peng2019advantage,fujimoto2021minimalist,xu2021offline}. Only a few recent studies have addressed both OOD actions and states using state-action-level behavior constraints~\citep{li2022dealing,zhang2022state,lee2021optidice,lee2022coptidice,mao2024odice}. In particular, there is an important line of work on DIstribution Correction Estimation (DICE)~\citep{nachum2020reinforcement} that constrains the distance in terms of the joint state-action occupancy measure between the learning policy and the offline policy. These DICE-based methods have demonstrated impressive performance results on the D4RL benchmarks~\citep{lee2021optidice,lee2022coptidice,mao2024odice}.  

It is important to note that that all the aforementioned offline RL approaches primarily focus on the single-agent setting. While multi-agent setting is prevalent in many real-world sequential decision-making tasks, offline MARL remains a relatively under-explored area. The multi-agent setting poses significantly greater challenges due to the large joint state-action space, which expands exponentially with the number of agents, as well as the inter-dependencies among the local policies of different agents. As a result, the offline data distribution can become quite sparse in these high-dimensional joint action spaces, leading to an increased number of OOD state-action pairs and exacerbating extrapolation errors.
A few recent studies have sought to address the negative effects of sparse data distribution in offline MARL by adapting the well-known centralized training decentralized execution (CTDE) paradigm from online MARL~\citep{oliehoek2008optimal,kraemer2016multi}, enabling data-related regularization at the individual agent level. Notably, some of these works~\citep{pan2022plan,shao2024counterfactual,wang2024offline_OMIGA} extend popular offline single-agent RL algorithms, such as CQL~\citep{kumar2020conservative} and SQL/EQL~\citep{xu2023offline}, within the CTDE framework. 

In our work, we focus on addressing the aforementioned challenges in offline cooperative MARL. In particular, we follow the DICE approach to address both OOD states and actions, motivated by remarkable performance of recent DICE-based methods in offline single-agent RL. Similar to previous works in offline MARL, we adopt the CTDE framework to handle exponential joint state-action spaces in the multi-agent setting. We remark that extending the DICE approach under this CTDE framework is not straightforward given the complex objective of DICE that involves the f-divergence in stationary distribution between the learning joint policy and the behavior policy. Therefore, the value decomposition in CTDE needs to be carefully designed to ensure the consistency in optimality between the global and local policies. In particular, we provide the following main contributions:
\begin{itemize}
    \item We propose ComaDICE, a new offline MARL algorithm that integrates DICE with a carefully designed value decomposition strategy. In ComaDICE, under the CTDE framework, we decompose both the global value function $\nu^{tot}$ and the global advantage functions $A^{tot}_{\nu}$, rather than using Q-functions as in previous MARL works.
    This unique factorization approach allows us to theoretically demonstrate that the global learning objective in DICE is convex in local values, provided that the mixing network used in the value decomposition employs non-negative weights and convex activation functions. This significant finding ensures that our decomposition strategy promotes an efficient and stable training process.
    \item Building on our decomposition strategy, we demonstrate that finding an optimal global policy can be divided into multiple sub-problems, each aims to identify a local optimal policy for an individual agent. We provide a theoretical proof that the global optimal policy is, in fact, equivalent to the product of the local policies derived from these sub-problems.
    \item Finally, we conduct extensive experiments to evaluate the performance of our algorithm, ComaDICE, in complex MARL environments, including: multi-agent StarCraft II (i.e., SMACv1~\citep{samvelyan2019starcraft}, SMACv2~\citep{ellis2022smacv2}) and multi-agent Mujoco~\citep{de2020deep} benchmarks. Our empirical results show that our ComaDICE outperforms several strong baselines in all these benchmarks.
\end{itemize}

\section{Related Work}
\paragraph{Offline Reinforcement Learning (offline RL).} 
Offline RL focuses on learning policies from pre-collected datasets without any further interactions with the environment~\citep{levine2020offline,prudencio2023survey}. A significant challenge in offline RL is the issue of distribution shift, where unseen actions and states may arise during training and execution, leading to inaccurate policy evaluations and suboptimal outcomes. Consequently, there is a substantial body of literature addressing this challenge through various approaches~\citep{prudencio2023survey}.
In particular, some studies impose explicit or implicit policy constraints to ensure that the learned policy remains close to the behavioral policy~\citep{fujimoto2019off,kumar2019stabilizing,wu2019behavior,kostrikov2021offline,peng2019advantage,nair2020awac,fujimoto2021minimalist,xu2021offline,cheng2024look,li2023proto}. Others incorporate regularization terms into the learning objectives to mitigate the value overestimation on OOD actions~\citep{kumar2020conservative,kostrikov2021offline,xu2022constraints,niu2022trust,xu2023offline,wang2024offline_OMIGA}.
Uncertainty-based offline RL methods seek to balance conservative approaches with naive off-policy RL techniques, relying on estimates of model, value, or policy uncertainty~\citep{agarwal2020optimistic,an2021uncertainty,bai2022pessimistic}. Offline model-based algorithms focus on conservatively estimating the transition dynamics and reward functions based on the pre-collected datasets~\citep{kidambi2020morel,yu2020mopo,matsushima2020deployment,yu2021combo}. Some other methods impose action-level regularization through imitation learning techniques ~\citep{xu2022discriminator,chen2020bail,zhang2023saformer,zheng2024safe,brandfonbrener2021offline,xu2022policy}. Finally, while a majority of previous works target OOD actions only, there are a few recent works attempt to address both OOD states and actions~\citep{li2022dealing,zhang2022state,lee2021optidice,lee2022coptidice,sikchi2023imitation,mao2024odice}. Our work on offline MARL follow the DICE-based approach, as motivated by compelling performance of DICE-based algorithms in single-agent settings~\citep{lee2021optidice,lee2022coptidice,sikchi2023imitation,mao2024odice}.


\paragraph{Offline Multi-agent Reinforcement Learning (offline MARL).} While there is a substantial body of literature on offline single-agent RL, research on offline MARL remains limited. Offline MARL faces challenges from both distribution shift—characteristic of offline settings—and the exponentially large joint action space typical of multi-agent environments. Recent studies have begun to merge advanced methodologies from both offline RL and MARL to address these challenges~\citep{yang2021believe,pan2022plan,shao2024counterfactual,wang2024offline_OMIGA}
Specifically, these works employ local policy regularization within the centralized training with decentralized execution (CTDE) framework to mitigate distribution shift. The CTDE paradigm, well-established in online MARL, facilitates more efficient and stable learning while allowing agents to operate in a decentralized manner~\citep{oliehoek2008optimal,kraemer2016multi}. For instance, \citet{yang2021believe} utilize importance sampling to manage local policy learning on OOD samples. Both works by~\citet{pan2022plan} and~\citet{shao2024counterfactual} are built upon CQL~\citep{kumar2020conservative}, a prominent offline RL algorithm for single-agent scenarios. \citet{matsunaga2023alberdice} adopt the Nash equilibrium solution concept in game theory to iteratively update best responses of individual agents. Finally, OMIGA~\citep{wang2024offline_OMIGA} establishes the equivalence between global and local value regularization within a policy constraint framework, making it the current state-of-the-art algorithm in offline MARL.

Beyond this main line of research, some studies formulate offline MARL as a sequence modeling problem, employing supervised learning techniques to tackle the issue~\citep{meng2023offline,tseng2022offline}, while others adhere to decentralized approaches~\citep{jiang2023offline}.


 \section{{Preliminaries}\label{sec:background}
}

 Our work focuses on  cooperative multi-agent RL, which can be modeled as a multi-agent Partially Observable Markov Decision Process (POMDP), defined by the tuple $\cM = \langle \cS, \cA, P, r, \cZ, \cO, n,\cN, \gamma \rangle$. Here, $n$ is number of agents, $\cN = \{1,\ldots,n\}$ is the set of agents,  $\bs \in S$ represents the true state of the multi-agent environment, and $\cA =  \prod_{i\in \cN} \cA_i $ is the set of joint  actions, where $\cA_i$ is the set of individual actions  available to agent $i\in \cN$. At each time step, each agent $i \in \{1, 2, \dots, n\}$ selects an action $a_i \in \cA_i$, forming a joint action $\ba = (a_1, a_2, \dots, a_n) \in \cA$. The transition dynamics $P(\bs'|\bs, \ba) : \cS \times \cA \times \cS \to [0, 1]$ describe the probability of transitioning to the next  state $\bs'$ when agents take an action $\ba$ from the current state $\bs$. The discount factor $\gamma \in [0, 1)$ represents the weight given to future rewards.  In a partially observable environment, each agent receives a local observation $o_i \in \cO_i$ based on the observation function $\cZ_i(\bs) : \cS \to \cO_i$, and we denote the joint observation as $\bo = (o_1, o_2, \dots, o_n)$. In cooperative MARL, all agents share a global reward function $r(\bo, \ba) : \cO \times \cA \to \mathbb{R}$. The goal of all agents is to learn a joint policy $\bpi_{\text{tot}} = \{\pi_1, \dots, \pi_n\}$ that collectively maximize the expected discounted returns $\mathbb{E}_{(\bo,\ba) \sim \bpi_{\text{tot}}} \left[\sum_{t=0}^{\infty} \gamma^t r(\bo_t, \ba_t)\right]$. In the offline MARL setting, a pre-collected dataset $\cD$ is obtained by sampling from a behavior policy $\mu_{\text{tot}} = \{\mu_1, \dots, \mu_n\}$, and the policy learning is conducted soly based on $\cD$, with no interactions with the environment. We also define the occupancy measure (or stationary distribution) as follows:
 \[
 \rho^{\bpi_{tot}}(\bo,\ba) = (1-\gamma) \sum\nolimits_{t=0}^\infty P(\bo_t = \bo,~\ba_t = \ba)
 \]
which represents distribution visiting the pair (observation, action) $(\bo_t,\ba_1)$ when following the joint policy $\bpi_{tot}$, where $\bs_0 \sim P_0, ~\ba_t \sim \bpi_{tot}(\cdot|\bs_t)$  and $\bs_{t+1} \sim P(\cdot|\bs_t,\ba_t)$.


 
\section{ComaDICE: Offline Cooperative Multi-Agent RL with Stationary Distribution Correction Estimation}

We consider an offline cooperative MARL problem where the goal is to optimize the expected discounted joint reward. In this work, we focus on the DICE objective function~\cite{nachum2020reinforcement}, which incorporates a stationary distribution regularizer to capture the divergence between the occupancy measures of the learning policy, $\bpi_{tot}$, and the behavior policy, \(\mu_{tot}\), formulated as follows:
\begin{align}
    \max\nolimits_{\bpi_{tot}} ~~\bbE_{(\bo,\ba)\sim \rho^{\bpi_{tot}}}\left[r(\bo,\ba)\right]  - \alpha D^f\left(\rho^{\bpi_{tot}} \parallel \rho^{\mu_{tot}}\right)\label{prob:main-dice}
\end{align}
where
$D^f\left(\rho^{\bpi_{tot}} \parallel \rho^{\mu_{tot}}\right) = \bbE_{(\bo,\ba)\sim \rho^{\bpi_{tot}}} \left[f\left(\frac{\rho^{\bpi_{tot}}}{\rho^{\mu_{tot}}}\right)\right]$
is the f-divergence between the stationary distribution \(\rho^{\bpi_{tot}}\) of the learning policy and \(\rho^{\mu_{tot}}\) of the behavior policy. In this work, we consider $f(\cdot)$ to be strictly convex and differentiable. The parameter \(\alpha\) controls the trade-off between maximizing the reward and penalizing deviation from the offline dataset's distribution (i.e., penalizing distributional shift). When \(\alpha = 0\), the problem becomes the standard offline MARL, where the objective is to find a joint policy that maximizes the expected joint reward. On the other hand, when \(\alpha \gg 1\), the problem shifts towards imitation learning, aiming to closely mimic the behavioral policy.

This DICE-based approach offers the advantage of better capturing the system dynamics inherent in the offline data. Such stationary distributions, \(\rho^{\bpi_{tot}}\) and \(\dmutot\), however, are not directly available. We will discuss how to estimate them in the next subsection.

\subsection{Constrained Optimization in the  Stationary Distribution Space}
 We first formulate the learning problem in Eq. \ref{prob:main-dice} as a constrained optimization on the space of $\dtot$:
 \begin{align}
     \max\nolimits_{\dtot}\quad & \bbE_{(\bo,\ba)\sim \rho^{\bpi_{tot}}}\left[r(\bo,\ba)\right]  - \alpha D^f\left(\rho^{\bpi_{tot}} \parallel \rho^{\mu_{tot}}\right)\label{prob:dice-2}\\
     \textbf{s.t.}\quad & \sum\nolimits_{\ba'} \dtot(\bs,\ba') = (1-\gamma)p_0(\bs) + \gamma \sum\nolimits_{\ba',\bs'}  \dtot(\bs',\ba') P(\bs| \ba',\bs'),~\forall \bs \in \cS \label{prob:dice-2.1}
 \end{align}
When \(f\) is convex, (\ref{prob:dice-2}-\ref{prob:dice-2.1}) becomes a convex optimization problem, as it involves maximizing a concave objective function subject to linear constraints.
We now consider the Lagrange dual of (\ref{prob:dice-2}-\ref{prob:dice-2.1}):
\begin{align}
    \cL(\nu^{tot}, &\dtot) = \bbE_{(\bo, \ba) \sim \rho^{\bpi_{tot}}}\left[r(\bo, \ba)\right] 
- \alpha \bbE_{(\bs, \ba) \sim \dmutot}\left[f\left(\frac{\dtot(\bs, \ba)}{\dmutot(\bs, \ba)}\right)\right] \nonumber\\
&- \sum\nolimits_{\bs}\nu^{tot}(\bs) \Big(\sum\nolimits_{\ba'} \dtot(\bs, \ba') - (1 - \gamma) p_0(\bs) - \gamma \sum\nolimits_{\ba', \bs'} \dtot(\bs', \ba') P(\bs | \ba', \bs')\Big)
\end{align}
where $\nu^{tot}(\bs)$ is a Lagrange multiplier. Since (\ref{prob:dice-2}-\ref{prob:dice-2.1}) is a convex optimization problem, it is equivalent to the following minimax problem over the spaces of $\nu^{tot}$ and $\dtot$:
\[
\min\nolimits_{\nu^{tot}} \max\nolimits_{\dtot} \left\{ \cL(\nu^{tot}, \dtot) \right\}
\]
Furthermore, we observe that $\cL(\nu^{tot}, \dtot)$ is linear in $\nu^{tot}$ and concave in $\dtot$, so the minimax problem has a saddle point, implying:
$\min_{\nu^{tot}} \max_{\dtot} \left\{ \cL(\nu^{tot}, \dtot) \right\} = \max_{\dtot} \min_{\nu^{tot}} \left\{ \cL(\nu^{tot}, \dtot) \right\}.$
In a manner analogous to the single-agent case, by defining $w^{tot}_\nu(\bs, \ba) = \frac{\dtot(\bs, \ba)}{\dmutot(\bs, \ba)}$, the Lagrange dual function can be simplified into the more compact form (with detailed derivations are in the appendix):
\begin{align*}
    &\cL(\nu^{tot}, w^{tot}) = (1 - \gamma)\bbE_{\bs \sim p_0}[\nu^{tot}(\bs)] + \bbE_{(\bs, \ba) \sim \dmutot}\left[-\alpha f\left(w^{tot}_\nu(\bs, \ba)\right) + w^{tot}_\nu(\bs, \ba) A^{tot}_{\nu}(\bs, \ba)\right]
\end{align*}
where $A^{tot}_{\nu}$ is an ``advantage function'' defined based on $\nu^{tot}$ as:
\begin{equation}\label{eq:A-Q-v}
    A^{tot}_{\nu}(\bs, \ba) = q^{tot}(\bs, \ba) - \nu^{tot}(\bs)
\end{equation}
with $q^{tot}(\bs, \ba) = r(\cZ(\bs), \ba) + \gamma \bbE_{\bs' \sim P(\cdot | \bs, \ba)}[\nu^{tot}(\bs')]$. It is important to note that $\nu^{tot}(\bs)$ and $q^{tot}(\bs, \ba)$ can be interpreted as a value function and a Q-function, respectively, arising from the decomposition of the stationary distribution regularizer. We can now write the learning problem as follows:
\begin{align}
      \min\nolimits_{\nu^{tot}} \max\nolimits_{w^{tot} \geq 0} ~~\{\cL(\nu^{tot}, w^{tot})\}\label{prob:minimax-w-nu}
\end{align}
It can be observed that $\cL(\nu^{tot},w^{tot})$ is linear in $\nu^{tot}$ and concave in $w^{tot}$, which ensures well-behaved properties in both the $\nu^{tot}$- and $w^{tot}$-spaces. A key feature of the above minimax problem is that the inner maximization problem has a closed-form solution, which greatly simplifies the minimax problem, making it no longer adversarial. We formalize this result as follows:
\begin{proposition}\label{prop.1}
    The   minimax problem in Eq. \ref{prob:minimax-w-nu} is equivalent to 
$    \min_{\nu^{tot}} ~~ \big\{\widetilde{\cL}(\nu^{tot})\big\}$,  where
 \[
\widetilde{\cL}(\nu^{tot}) = (1 - \gamma)\bbE_{\bs \sim p_0}[\nu^{tot}(\bs)] + \bbE_{(\bs, \ba) \sim \dmutot}\bigg[\alpha  f^*\bigg(\frac{A^{tot}_{\nu}(\bs, \ba)}{\alpha}\bigg)\bigg]
 \]
Here, $f^*$ is  convex conjugate of $f$, i.e., $f^*(y) = \sup_{t\geq 0} \{ty - f(t)\}$. Moreover, if  $\nu^{tot}$ is parameterized by $\theta$, the first order derivative of $\widetilde{\cL}(\nu^{tot})$ w.r.t. $\theta$ is given as follows:
 \[
 \nabla_{\theta}  \widetilde{\cL}(\nu^{tot}) = (1 - \gamma)\bbE_{\bs \sim p_0}[\nabla_\theta \nu^{tot}(\bs)] + \bbE_{(\bs, \ba) \sim \dmutot}\left[\nabla_\theta A^{tot}_{\nu}(\bs, \ba)  w^{tot*}_{\nu}(\bs,\ba) \right]
 \]
where   $w^{tot*}_{\nu}(s,a)  = \max \{0, {f'}^{-1}(A^{tot}_\nu(\bs,\ba)/\alpha)\}$, with $f'^{-1}(\cdot)$ is the inverse function of the first-order derivative of $f$.
\end{proposition}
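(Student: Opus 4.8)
The plan is to solve the inner maximization in closed form and then invoke an envelope argument for the gradient. First I would observe that, since $w^{tot}$ ranges over all nonnegative functions on $\cS \times \cA$ with no coupling constraints linking different pairs, the maximization $\max_{w^{tot} \geq 0} \cL(\nu^{tot}, w^{tot})$ in its compact form decouples pointwise over each $(\bs, \ba)$ in the support of $\dmutot$. It therefore suffices to maximize, for each fixed $(\bs, \ba)$, the scalar function $h(w) = -\alpha f(w) + w\, A^{tot}_{\nu}(\bs, \ba)$ over $w \geq 0$. Because $f$ is strictly convex and differentiable, $h$ is strictly concave and admits a unique maximizer; setting $h'(w) = -\alpha f'(w) + A^{tot}_\nu(\bs,\ba) = 0$ gives the unconstrained stationary point $w = {f'}^{-1}(A^{tot}_\nu(\bs,\ba)/\alpha)$, and projecting onto $[0,\infty)$ yields $w^{tot*}_\nu(\bs,\ba) = \max\{0, {f'}^{-1}(A^{tot}_\nu(\bs,\ba)/\alpha)\}$, exactly as claimed.

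The key identity is then to read off the optimal value. By the definition $f^*(y) = \sup_{t \geq 0}\{ty - f(t)\}$, setting $w = t$ and factoring out $\alpha$ gives $\max_{w \geq 0}\{w A^{tot}_\nu(\bs,\ba) - \alpha f(w)\} = \alpha \sup_{t \geq 0}\{t\, A^{tot}_\nu(\bs,\ba)/\alpha - f(t)\} = \alpha f^*(A^{tot}_\nu(\bs,\ba)/\alpha)$. Substituting this pointwise optimum back into the compact Lagrangian replaces the bracketed integrand by $\alpha f^*(A^{tot}_\nu(\bs,\ba)/\alpha)$ and produces exactly $\widetilde{\cL}(\nu^{tot})$, which establishes the equivalence $\min_{\nu^{tot}} \max_{w^{tot}\geq 0} \cL = \min_{\nu^{tot}} \widetilde{\cL}(\nu^{tot})$.

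For the gradient I would appeal to the envelope (Danskin) theorem. Writing $\widetilde{\cL}(\nu^{tot}) = \cL(\nu^{tot}, w^{tot*})$ with $w^{tot*}$ the unique maximizer above, differentiating in $\theta$ gives a direct term (differentiating $\cL$ with $w^{tot*}$ held fixed) plus an indirect term proportional to $\partial \cL/\partial w^{tot}$ evaluated at $w^{tot*}$. The indirect term vanishes: on the interior $w^{tot*} > 0$ it is zero by the first-order condition, and on the boundary $w^{tot*} = 0$ the optimizer is locally constant in $\theta$. Hence only the direct term survives, and differentiating $(1-\gamma)\bbE_{\bs \sim p_0}[\nu^{tot}(\bs)] + \bbE_{(\bs,\ba) \sim \dmutot}[-\alpha f(w^{tot*}) + w^{tot*} A^{tot}_\nu(\bs,\ba)]$ with $w^{tot*}$ treated as constant leaves precisely $(1-\gamma)\bbE_{\bs \sim p_0}[\nabla_\theta \nu^{tot}(\bs)] + \bbE_{(\bs,\ba)\sim\dmutot}[w^{tot*}_\nu(\bs,\ba)\, \nabla_\theta A^{tot}_\nu(\bs,\ba)]$.

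The main obstacle I anticipate is the rigorous justification of the envelope step at the boundary together with interchanging differentiation and expectation. Strict concavity guarantees a unique maximizer, so $\widetilde{\cL}$ is single-valued, but one must argue that the kink introduced by the $\max\{0,\cdot\}$ projection does not contribute an extra subgradient term; this holds because across the kink the cross term carries the factor $\partial\cL/\partial w^{tot} = 0$, so the product stays continuous. Under mild integrability/dominated-convergence conditions on $f^*$ and $\nabla_\theta \nu^{tot}$, differentiation under the expectation is then justified, completing the argument.
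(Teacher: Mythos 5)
Your proposal is correct and follows essentially the same route as the paper's proof: both decouple the inner maximization pointwise over $(\bs,\ba)$, solve the scalar concave problem to obtain $w^{tot*}_\nu(\bs,\ba)=\max\{0,{f'}^{-1}(A^{tot}_\nu(\bs,\ba)/\alpha)\}$ and the optimal value $\alpha f^*(A^{tot}_\nu(\bs,\ba)/\alpha)$, and then obtain the gradient via the envelope identity $\nabla f^*(y)=t^*$ (the paper states this conjugate-gradient fact directly; your Danskin argument is the same fact with the boundary case spelled out more carefully). Your handling of the $\alpha$ scaling in the conjugate is in fact slightly cleaner than the paper's intermediate display, which omits the factor $\alpha$ before restoring it in the final expression.
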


\subsection{Value Factorization}
Directly optimizing $ \min_{\nu^{tot}} ~~ \left\{\cL(\nu^{tot},w^{tot*}_{\nu})\right\}$ in multi-agent settings is generally impractical due to the large state and action spaces. Therefore, we follow the idea of value decomposition in the well-known CTDE framework in cooperative MARL to address this computational challenge. However, it is not straightforward to extend the DICE approach within this CTDE framework due to the complex objective of DICE, which involves the f-divergence between the learnt joint policy and the behavior policy in stationary distributions. It is thus crucial to carefully design the value decomposition in CTDE to ensure optimality consistency between the global and local policies.

Specifically, we adopt a factorization approach that decomposes the value function $\nu^{tot}(\bs)$ (or global Lagrange multipliers) into local values using mixing network architectures. Let $\bnu(\bs) = \{\nu_1(o_1), \ldots, \nu_n(o_n)\}$ represent a collection of local ``value functions'' and let $\bA_{\bnu}(\bs, \ba) = \{A_i(o_i,a_i),~i = 1,...,n\}$ represent a collection of local advantage functions. The local advantage functions are computed as $A_i(o_i,a_i) = q_i(o_i,a_i)  - \nu_i(o_i)$ for all $i \in \cN$, where  $\bq(\bs, \ba) = \{q_i(o_i,a_i), ~ i=1,...,n\}$ is a vector  of local Q-functions.  To facilitate centralized learning, we create a mixing network, $\cM_{\theta}$, where $\theta$ are the learnable weights, that aggregates the local values to form the global value and advantage functions as follows:
\[
\nu^{tot}(\bs,\ba) =  \cM_{\theta} [\bnu(\bs)], \quad A^{tot}_{\nu}(\bs,\ba) =  \cM_{\theta} [ \bq(\bs,\ba) - \bnu(\bs)],
\]
where each network takes the vectors $\bnu(\bs)$ or $\bA_{\bnu}(\bs,\ba)$ as inputs and outputs $\nu^{tot}$ and $A^{tot}_\nu$, respectively. Under this architecture, the learning objective becomes:
\[
\widetilde{\cL}(\bnu, \theta) =  (1 - \gamma)\bbE_{\bs \sim p_0}[\cM_{\theta} [\bnu(\bs)]] + \bbE_{(\bs, \ba) \sim \dmutot}\left[\alpha f^*\left(\frac{\cM_{\theta} [\bq(\bs,\ba) - \bnu(\bs)]}{\alpha}\right) \right],
\]
with the observation that $\bA_\nu(\bs,\ba)$ can be expressed as a linear function of $\bnu$. There are different ways to construct the mixing network $\cM_\theta$; prior works often employ a single linear combination (1-layer network) or a two-layer network with convex activations such as ReLU, ELU, or Maxout. In the following, we show a general result stating that the learning objective function is convex in $\bnu$, provided that the mixing network is constructed with non-negative weights and convex activations.
\begin{theorem}\label{th:convex}
If the mixing network $\cM_\theta[\cdot]$ is constructed with non-negative weights and convex activations, then $\widetilde{\cL}(\bnu,\theta)$ is convex in $\bnu$.
\end{theorem}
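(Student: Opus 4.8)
The plan is to split $\widetilde{\cL}(\bnu,\theta)$ into its two summands, show each is convex in $\bnu$ for every fixed argument inside the expectation, and then conclude using the fact that expectations and non-negative linear combinations preserve convexity. The workhorse is a lemma asserting that the mixing network $\cM_\theta[\cdot]$, viewed as a map from its input vector to $\R$, is convex (and non-decreasing) whenever all of its weights are non-negative and all of its activations are convex and non-decreasing --- the standard input-convex-network property.

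First I would prove this lemma by induction on the layers of $\cM_\theta$. The initial affine map is both convex and concave in the input. For the inductive step, assume each pre-activation entering a layer is convex in the input; applying a convex non-decreasing activation preserves convexity (the rule ``convex non-decreasing $\circ$ convex is convex''), and the subsequent affine combination, having non-negative weights, is a non-negative sum of convex functions plus a constant, hence convex. Non-negativity of the weights is precisely what prevents the weighted sum from breaking convexity, and monotonicity of the activations is what allows convexity to propagate across more than one layer.

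Next I would apply the lemma to the two summands. The first feeds $\bnu(\bs)$ into $\cM_\theta$, which is linear in $\bnu$; the second feeds $\bA_{\bnu}(\bs,\ba) = \bq(\bs,\ba) - \bnu(\bs)$, which is affine in $\bnu$ because each local $q_i$ is an affine Bellman-type function of $\nu_i$ (as already noted in the text). Since a convex function composed with an affine map is convex, both $\cM_\theta[\bnu(\bs)]$ and $\cM_\theta[\bA_{\bnu}(\bs,\ba)]$ are convex in $\bnu$. For the second summand I would then observe that $f^*$ is convex (a supremum of affine functions of its argument) and, crucially, non-decreasing: because the conjugate $f^*(y)=\sup_{t\ge 0}\{ty-f(t)\}$ is taken over $t\ge 0$, we have $ty_1-f(t)\le ty_2-f(t)$ whenever $y_1\le y_2$, so the supremum is monotone. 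Hence $y\mapsto \alpha f^*(y/\alpha)$ is convex and non-decreasing (using $\alpha>0$), and composing it with the convex map $\cM_\theta[\bA_{\bnu}(\bs,\ba)]$ yields a convex function of $\bnu$. Integrating both summands against the fixed measures $p_0$ and $\dmutot$ and scaling by $(1-\gamma)\ge 0$ and $\alpha\ge 0$ preserves convexity, so $\widetilde{\cL}(\bnu,\theta)$ is convex in $\bnu$.

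I expect the main obstacle to be making the mixing-network induction airtight, in particular insisting that the activations be non-decreasing (not merely convex) so that the composition rule applies across layers, and clarifying that only the hidden-to-hidden and hidden-to-output weights must be non-negative while the first affine map may carry arbitrary signs. A secondary subtlety, easy to overlook, is the monotonicity of $f^*$, which rests entirely on the restriction $t\ge 0$ in the conjugate; without it the outer composition step would not go through.
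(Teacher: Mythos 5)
Your proposal is correct, and the first half (the induction showing that a feed-forward mixing network with non-negative weights and convex, non-decreasing activations is convex and non-decreasing in its input, then composing with the affine maps $\bnu\mapsto\bnu(\bs)$ and $\bnu\mapsto\bq(\bs,\ba)-\bnu(\bs)$) is essentially identical to the paper's Lemma on $\cM_\theta$. Where you genuinely diverge is the treatment of the $f^*$ term. You argue directly: $f^*(y)=\sup_{t\ge 0}\{ty-f(t)\}$ is a supremum of affine functions with non-negative slopes, hence convex \emph{and} non-decreasing, so $y\mapsto\alpha f^*(y/\alpha)$ composed with the convex map $\cM_\theta[\bq(\bs,\ba)-\bnu(\bs)]$ is convex, and expectations preserve this. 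The paper explicitly declares this direct route ``difficult to analyze'' and instead writes $\widetilde{\cL}(\bnu,\theta)=\max_{w^{tot}\ge 0}\cL(\bnu,\theta,w^{tot})$, notes that each $\cL(\bnu,\theta,w^{tot})$ is convex in $\bnu$ for fixed $w^{tot}\ge 0$, and invokes the fact that a pointwise supremum of convex functions is convex. The two arguments are two faces of the same underlying fact --- the sup representation over $t\ge 0$ is precisely what makes $f^*$ convex and monotone --- but yours is arguably cleaner in that it isolates the one property ($f^*$ non-decreasing, which hinges on the restriction $t\ge 0$ in the conjugate) that the composition rule needs, while the paper's version never has to state any property of $f^*$ at all. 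Your two side remarks are also sound refinements rather than gaps: the activations must indeed be non-decreasing (the theorem statement says only ``convex,'' but the paper's own induction uses monotonicity of $\sigma^n$ in two places, and the intended activations ReLU/ELU/Maxout satisfy it), and under the standard weights-then-activation convention the first weight matrix may carry arbitrary signs since an affine map of an affine map is affine.
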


Theorem \ref{th:convex} shows that  $\widetilde{\cL}(\bnu,\theta)$ is convex in $\bnu$ when using \textit{any multi-layer feed-forward mixing networks with non-negative weights and convex activation functions}. This finding is highly general and non-trivial, given the nonlinearity and complexity of both the function (in terms of $\bnu$) and the mixing networks. Previous work has often focused on single-layer~\citep{wang2024offline_OMIGA} or two-layer mixing structures~\citep{rashid2020monotonic,bui2023inverse}, emphasizing that such two-layer networks can approximate any monotonic function arbitrarily closely as network width approaches infinity~\citep{dugas2009incorporating}. In our experiments, we test two configurations for the mixing network: a linear combination (or 1-layer) and a 2-layer feed-forward network. While 2-layer mixing structures have shown strong performance in online MARL~\citep{rashid2020monotonic,son2019qtran,wang2020qplex}, we observe in our offline settings that the linear combination approach provides more stable results.

 \subsection{Policy Extraction}

Let $\bnu^*$ be an optimal solution to the training problem with mixing networks, i.e., 
\begin{equation}\label{prob:main-L}
    \min\limits_{\bnu, \theta} \widetilde{\cL}(\bnu,\theta).
\end{equation}
We now need to extract a local and joint policy from this solution. 
Based on Prop.~\ref{prop.1}, given $\bnu^*$, we can compute this occupancy ratio as follows: :
\[
w^{tot*}(\bs, \ba) =  \max \left\{ 0, {f'}^{-1} \left( \frac{\cM_{\theta} [\bA_{\bnu^*}(\bs,\ba)]}{\alpha} \right) \right\}.
\]
The global policy can then be obtained as follows: 
$\bpi^*_{tot}(\ba|\bs) = \frac{w^{tot*}(\bs, \ba) \cdot \dmutot(\bs,\ba)}{\sum_{\ba' \in \mathcal{A}} w^{tot*}(\bs, \ba') \cdot \dmutot(\bs,\ba')}.
$
This computation, however, is not practical since $\dmutot$ is generally not available and might not be accurately estimated in the offline setting. A more practical way to estimate the global policy, $\bpi^*_{tot}$, as the result of solving the following weighted behavioral cloning (BC): 
\begin{align}
\max_{\bpi_{tot} \in \Pi_{tot}} \bbE_{(\bs,\ba) \sim \rho^{\bpi^*_{tot}}} [\log \bpi_{tot}(\ba|\bs)] 
= \max_{\bpi_{tot}\in \Pi_{tot}} \bbE_{(\bs,\ba) \sim \dmutot} [ w^{tot*}(\bs,\ba) \log \bpi_{tot}(\ba|\bs)]\label{prob:pi-tot}
\end{align}
where $\Pi_{tot}$ represents the feasible set of global policies. Here we assume that $\Pi_{tot}$ contains decomposable global policies, i.e., $\Pi_{tot} = \{\bpi_{tot} \mid \exists \pi_i, ~\forall i \in \mathcal{N} \text{ such that } \bpi_{tot}(\ba|\bs) = \prod_{i\in \mathcal{N}} \pi_i(a_i|o_i)\}$. In other words, $\Pi_{tot}$ consists of global policies that can be expressed as a product of local policies. This decomposability is highly useful for decentralized learning and has been widely adopted in MARL \citep{wang2024offline_OMIGA,bui2023inverse,zhang2021fop}.

While the above  weighted BC  appears practical, as $(\bs, \ba)$ can be sampled from the offline dataset generated by $\dtot$, and since $w^{tot*}(\bs, \ba)$ is available from solving \ref{prob:main-L}, it does not directly yield local policies, which are essential for decentralized execution. To address this, we propose solving the following weighted BC for each local agent \( i \in \mathcal{N} \):
\[
\max\nolimits_{\pi_i} \mathbb{E}_{(\bs, \ba) \sim \mathcal{D}} \left[ w^{tot*}(\bs, \ba) \log \pi_i(a_i|o_i) \right]. \label{prob:pi-local}
\]
This local WBC approach has several attractive properties. First, $w^{tot*}(\bs, \ba)$ explicitly appears in the local policy optimization and is computed from global observations and actions. This enables local policies to be optimized with global information, ensuring consistency with the credit assignment in the multi-agent system. Furthermore, as shown in Proposition~\ref{prop.consistency} below, the optimization of local policies through the lobcal WBC is highly consistent with the global  weighted BC in \ref{prob:pi-tot}.
\begin{proposition}\label{prop.consistency}
    Let \( \pi^* \) be the optimal solution to \ref{prob:pi-local}. Then \( \boldsymbol{\pi}^*_{tot}(\ba|\bs) = \prod_{i \in \mathcal{N}} \pi^*_i(a_i|o_i) \) is also optimal for the global weighted BC in \ref{prob:pi-tot}.
\end{proposition}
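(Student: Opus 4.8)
The plan is to exploit the product structure of the feasible set $\Pi_{tot}$ together with the additivity of the logarithm, so that the global weighted-BC objective in \ref{prob:pi-tot} separates into a sum of per-agent objectives, each of which coincides with the local weighted-BC objective in \ref{prob:pi-local}. The argument is essentially one of separable optimization over a product domain, so I do not expect a hard analytic core; the care lies in matching the two problems exactly.

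First I would substitute the decomposition $\bpi_{tot}(\ba|\bs) = \prod_{i\in\mathcal{N}} \pi_i(a_i|o_i)$, which holds for every $\bpi_{tot}\in\Pi_{tot}$ by definition of the feasible set, into the global objective. Taking logarithms turns the product into a sum, $\log \bpi_{tot}(\ba|\bs) = \sum_{i\in\mathcal{N}} \log\pi_i(a_i|o_i)$, and by linearity of expectation the global objective becomes
\[
\bbE_{(\bs,\ba)\sim\dmutot}\Big[w^{tot*}(\bs,\ba)\sum\nolimits_{i\in\mathcal{N}}\log\pi_i(a_i|o_i)\Big] = \sum\nolimits_{i\in\mathcal{N}} \bbE_{(\bs,\ba)\sim\dmutot}\big[w^{tot*}(\bs,\ba)\log\pi_i(a_i|o_i)\big].
\]
Reading the offline dataset distribution as the behavior occupancy measure (i.e. identifying $\bbE_{(\bs,\ba)\sim\mathcal{D}}$ with $\bbE_{(\bs,\ba)\sim\dmutot}$, since $\mathcal{D}$ is generated by $\mu_{tot}$), each summand is precisely the objective maximized in the local problem \ref{prob:pi-local} for agent $i$.

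Next I would invoke separability: the $i$-th summand depends only on $\pi_i$, and $\Pi_{tot}$ is exactly the product over agents of the sets of valid local conditional policies, with no coupling constraints linking different agents. Hence maximizing the sum over $\Pi_{tot}$ decomposes into $n$ independent maximizations, and the joint maximum equals $\sum_{i\in\mathcal{N}} \max_{\pi_i}(\cdot)$. Since by hypothesis each $\pi^*_i$ attains the corresponding per-agent maximum in \ref{prob:pi-local}, the product $\bpi^*_{tot} = \prod_{i\in\mathcal{N}} \pi^*_i$ attains the global maximum in \ref{prob:pi-tot}, which is the claimed optimality.

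The one point requiring explicit care is that the weight $w^{tot*}(\bs,\ba)$ is a fixed, already-computed function of the joint observation and action (obtained from $\bnu^*$ via Proposition \ref{prop.1}) that does not depend on the policies being optimized; it therefore acts as a common nonnegative reweighting shared across all per-agent objectives and does not destroy the separation. I would also note that nonnegativity of $w^{tot*}$, guaranteed by the $\max\{0,\cdot\}$ in Proposition \ref{prop.1}, ensures each local subproblem is a genuine weighted maximum-likelihood (behavioral cloning) problem, so no degeneracy arises when decomposing the global maximization.
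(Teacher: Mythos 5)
Your proposal is correct and follows essentially the same route as the paper's proof: decompose $\log\bpi_{tot}$ into $\sum_i\log\pi_i$, use linearity of expectation to split the weighted-BC objective into per-agent terms, and bound each term by the corresponding local maximum attained at $\pi^*_i$. The paper writes this as an explicit chain of inequalities for an arbitrary $\bpi_{tot}\in\Pi_{tot}$ rather than invoking separability of a maximum over a product domain, but the content is identical, and your added remarks on the fixedness and nonnegativity of $w^{tot*}$ are consistent with (though not required by) the paper's argument.
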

Here we note that consistency between global and local policies is a critical aspect of centralized training with CTDE. Previous MARL approaches typically achieve this by factorizing the Q- or V-functions into local functions, and training local policies based on these local ones \citep{rashid2020monotonic,wang2020qplex,bui2023inverse}. However, in our case, there are key differences that prevent us from employing such local values to derive local policies. Specifically, we factorize the Lagrange multipliers \( \nu^{tot} \) to train the stationary distribution ratio \( w^{tot} \). While local \( w \) values can be extracted from the local \( \nu_i \), these local \( w \) values do not represent a local stationary distribution ratio and therefore cannot be used to recover local policies.

\section{Practical Algorithm}
\begin{algorithm}[htb]
\caption{\textbf{ ComaDICE}: Offline \textbf{Co}operative \textbf{MA}RL with Stationary \textbf{DI}stribution \textbf{C}orrection \textbf{E}stimation}
\label{algo:ComaDICE}
    \centering
 \begin{algorithmic}[1]
 \STATE \textbf{Input:} Parameters $\theta, \psi_q, \psi_\nu,\eta_i$ and the corresponding learning rates $\lambda_\theta,\lambda_{\psi_q},\lambda_{\psi_\nu},\lambda_\eta$, respectively. Offline data $\cD$.
 \STATE \textbf{Output:} Local optimized polices $\pi_i$.
 \STATE \cmt{\# Training the occupancy ratio $w^{tot*}$}  
\FOR{\textit{a certain number of training steps}}
       \STATE $\psi_q =  \psi_q  - \lambda_{\psi_q} \nabla_{\psi_q}\cL(\psi_q)$ \quad\cmt{\# Update Q-function towards the MSE  in \ref{eq:Q-MSE}} 
        \STATE $\theta =  \theta   - \lambda_{\theta} \nabla_{\theta}\widetilde{\cL}(\psi_\nu,\theta)$ \quad\cmt{\# Update $\theta$ to minimize the loss in \ref{eq:main-loss}}
        \STATE $\psi_\nu =  \psi_\nu   - \lambda_{\psi_\nu} \nabla_{\psi_\nu}\widetilde{\cL}(\psi_\nu,\theta)$\quad\cmt{\# Update $\psi_\nu$ to minimize the loss in \ref{eq:main-loss}}
   \ENDFOR
    \STATE \cmt{\# Training local policy}  
\FOR{\textit{a certain number of training steps}}
       \STATE $\eta_i =  \eta_i + \lambda_{\eta} \nabla_{\eta_i}\cL_{\pi}(\eta_i)$ \quad\cmt{\# Update the local policy by optimizing \ref{eq:WBC}}
   \ENDFOR
   \STATE Return $\pi_i(a_i|o_i;\eta_i)$, $i=1,...,n$
 \end{algorithmic}
\end{algorithm}

Let $\cD$ represent the offline dataset, consisting of sequences of local observations and actions gathered from a global behavior policy $\pmb{\pi}_{tot}$. To train the value function $\bnu$, we construct a value network $\nu_i(o_i;\psi_{\nu})$ for each local agent $i$, along with a network for each local Q-function $q_i(o_i,a_i;\psi_q)$, where $\psi_\nu$  and $\psi_q$ are learnable parameters for the local value and Q-functions. Each local advantage function is then calculated as follows: 
The global value function and advantage function are subsequently aggregated using two mixing networks with a shared set of learnable parameters $\theta$:
\[
\nu^{tot}(\bs) = \cM^\bs_{\theta} [\bnu(\bs;\psi_\nu)], \quad A^{tot}_{\nu}(\bs,\ba) = \cM^\bs_{\theta} [\bq(\bs,\ba;\psi_q) - \bnu(\bs; \psi_\nu)],
\]
where $\cM^\bs_\theta[\cdot]$ represents a linear combination of its inputs with non-negative weights, such that $\cM^\bs_\theta[\bnu(\bs;\psi_\nu)] = \bnu(\bs;\psi_\nu)^\top W^\bs_\theta + b^\bs_\theta$, where $W^\bs_\theta$ and $b^\bs_\theta$ are weights of the mixing network.\footnote{In our experiments, we use a single-layer mixing network due to its superior performance compared to a two-layer structure, though our approach is general and can handle any multi-layer feed-forward mixing network.} It is important to note that $W^\bs_\theta$ and $b^\bs_\theta$ are generated by hyper-networks that take the global state $\bs$ and the learnable parameters $\theta$ as inputs. In this context, we employ the same mixing network $\cM^\bs_{\theta}$ to combine the local values and advantages. However, our framework is flexible enough to allow the use of two different mixing networks for $\nu^{tot}$ and $A_\nu^{tot}$.

In our setting, the relationship between the global Q-function, value, and advantage functions is described in Eq. \ref{eq:A-Q-v}. Specifically, we have:
 $ A^{tot}_{\nu}(\bs, \ba) =  r(\cZ(\bs), \ba) + \gamma \bbE_{\bs' \sim P(\cdot | \bs, \ba)}[\nu^{tot}(\bs')] - \nu^{tot}(\bs).$
To  capture this relationship, we train the Q-function by optimizing the following MSE loss:
\[
 \min\nolimits_{\bq}  \sum\nolimits_{(\bs,\ba,\bs')\sim \cD} \left( A^{tot}_{\nu}(\bs, \ba)  -  r(\cZ(\bs), \ba) + \gamma \nu^{tot}(\bs')- \nu^{tot}(\bs) \right)^2.
\]
This is equivalent to:
\begin{align}
     \min\nolimits_{\psi_q} \cL_q(\psi_q) =   \sum\nolimits_{(\bs,\ba,\bs')\sim \cD} &\Big( \cM^\bs_{\theta} [\bq(\bs,\ba;\psi_q) - \bnu(\bs; \psi_\nu)]   \nonumber\\
     &-  r(\cZ(\bs), \ba) + \gamma \cM^{\bs'}_{\theta} [\bnu(\bs';\psi_\nu)] - \cM^\bs_{\theta} [\bnu(\bs;\psi_\nu)] \Big)^2\label{eq:Q-MSE}
\end{align}

For the primary loss function used to train the value function, we leverage transitions from the offline dataset to approximate the objective $\widetilde{\cL}$, resulting in the following loss function for offline training:
\begin{equation}\label{eq:main-loss}
    \widetilde{\cL}(\psi_\nu, \theta) =  (1 - \gamma)\bbE_{\bs_0 \sim \cD}[\cM^{\bs_0}_{\theta} [\bnu(\bs_0;\psi_\nu)]] + \bbE_{(\bs, \ba) \sim \cD}\left[\alpha f^*\!\!\left(\frac{\cM^{\bs}_{\theta} [\bq(\bs,\ba;\psi_q) - \bnu(\bs;\psi_\nu)]}{\alpha}\right) \right]
\end{equation}
As mentioned, after obtaining $(\bnu^*, \theta^*)$ by solving $\min_{\psi_\nu, \theta}\widetilde{\cL}(\psi_\nu, \theta)$, we compute the occupancy ratio:
$w^{tot*}_{\nu}(\bs,\ba)  = \max \left\{ 0, {f'}^{-1} \left( \frac{\cM^{\bs}_{\theta^*} [\bnu^*(\bs)] - \cM^\bs_{\theta^*} [\bq(\bs,\ba;\psi_q)]}{\alpha} \right) \right\}.$
To train the local policy $\pi_i(a_i|o_i)$, we represent it using a policy network $\pi_i(a_i|o_i; \eta_i)$, where $\eta_i$ are the learnable parameters. The training process involves optimizing the following weighted behavioral cloning (BC) objective:
\begin{equation}\label{eq:WBC}
    \max\nolimits_{\eta_i} ~~ \cL_\pi(\eta_i) = \sum\nolimits_{(\bs,\ba) \sim \cD} w^{tot*}_{\nu}(\bs,\ba) \log( \pi_i(a_i|o_i;\eta_i)).
\end{equation}
Our ComaDICE algorithm consists of two primary steps. The first step involves estimating the occupancy ratio \( w^{tot*} \) from the offline dataset. The second step focuses on training the local policy by solving the weighted BC problem using \( w^{tot*} \). In the first step, we simultaneously update the Q-functions \( \psi_q \), the mixing network parameters \( \theta \), and the value function \( \psi_\nu \), aiming to minimize the mean squared error (MSE) in Eq. \ref{eq:Q-MSE} while optimizing the main loss function in Eq. \ref{eq:main-loss}.

\section{Experiments}

\subsection{Environments}
We utilize three standard MARL environments: SMACv1~\citep{samvelyan2019starcraft}, SMACv2~\citep{ellis2022smacv2}, and Multi-Agent MuJoCo (MaMujoco)~\citep{de2020deep}, each offering unique challenges and configurations for evaluating cooperative MARL algorithms.

\noindent\textbf{SMACv1.} SMACv1 is based on Blizzard's StarCraft II. It uses the StarCraft II API and DeepMind's PySC2 to enable agent interactions with the game. SMACv1 focuses on decentralized micromanagement scenarios where each unit is controlled by an RL agent. Tasks like \emph{2c\_vs\_64zg} and \emph{5m\_vs\_6m} are labeled hard, while \emph{6h\_vs\_8z} and \emph{corridor} are super hard. The offline dataset, provided by \citet{meng2023offline}, was generated using MAPPO-trained agents \citep{yu2022surprising}.

\noindent\textbf{SMACv2.} In comparison to SMACv1, SMACv2 introduces increased randomness and diversity by randomizing start positions, unit types, and modifying sight and attack ranges. This version includes tasks such as \emph{protoss}, \emph{terran}, and \emph{zerg}, with instances ranging from \emph{5\_vs\_5} to \emph{20\_vs\_23}, increasing in difficulty. Our offline dataset for SMACv2 was generated by running MAPPO for 10 million training steps and collecting 1,000 trajectories, ensuring medium quality but comprehensive coverage of the learning process. To the best of our knowledge, we are the first to explore SMACv2 in offline MARL, whereas most prior work has used this environment in online settings.

\noindent\textbf{MaMujoco.} MaMujoco serves as a benchmark for continuous cooperative multi-agent robotic control. Derived from the single-agent MuJoCo control suite in OpenAI Gym \citep{brockman2016openai}, it presents scenarios where multiple agents within a single robot must collaborate to achieve tasks. The tasks include \emph{Hopper-v2}, \emph{Ant-v2}, and \emph{HalfCheetah-v2}, with instances labeled as \emph{expert}, \emph{medium}, \emph{medium-replay}, and \emph{medium-expert}. The offline dataset was created by  \citep{wang2024offline_OMIGA} using the HAPPO method \citep{wang2022trust_HAPPO}.

\subsection{Baselines}
We consider the following baselines, which represent either standard or state-of-the-art (SOTA) methods for offline MARL: (i) \textbf{BC} (Behavioral Cloning); (ii) \textbf{BCQ} (Batch-Constrained Q-learning) \citep{fujimoto2019off} – an offline RL algorithm that constrains the policy to actions similar to those in the dataset to reduce distributional shift, adapted for offline MARL settings; (iii) \textbf{CQL} (Conservative Q-Learning) \citep{kumar2020conservative} – a method that stabilizes offline Q-learning by penalizing out-of-distribution actions, ensuring conservative value estimates; (iv) \textbf{ICQ} (Implicit Constraint Q-learning) \citep{yang2021believe} – an approach using importance sampling to manage out-of-distribution actions in multi-agent settings; (v) \textbf{OMAR} (Offline MARL with Actor Rectification) \citep{pan2022plan} – a method combining CQL with optimization techniques to ensure the global validity of local regularizations, promoting cooperative behavior; (vi) \textbf{OMIGA} (Offline MARL with Implicit Global-to-Local Value Regularization) \citep{wang2024offline_OMIGA} – a SOTA method that transforms global regularizations into implicit local ones, optimizing local policies with global insights.

\begin{table}[t!]\small
\small
\centering
\resizebox{\textwidth}{!}{
\begin{tabular}{cc|ccccccc}
\toprule
\multicolumn{2}{c|}{\multirow{2}{*}{\textbf{Instances}}} & \multirow{2}{*}{\textbf{BC}} & \multirow{2}{*}{\textbf{BCQ}} & \multirow{2}{*}{\textbf{CQL}} & \multirow{2}{*}{\textbf{ICQ}} & \multirow{2}{*}{\textbf{OMAR}} & \multirow{2}{*}{\textbf{OMIGA}} & \textbf{ComaDICE} \\
\multicolumn{2}{c|}{} &  &  &  &  &  &  & (ours) \\
\midrule
\multirow{5}{*}{Protoss} & 5\_vs\_5 & 36.9±8.7 & 16.2±2.3 & 10.0±4.1 & 36.9±9.1 & 21.2±4.1 & 33.1±5.4 & \red{46.2±6.1} \\
 & 10\_vs\_10 & 36.2±10.6 & 9.4±5.6 & 26.2±7.6 & 28.1±6.6 & 13.8±7.0 & 40.0±10.7 & \red{50.6±8.7} \\
 & 10\_vs\_11 & 19.4±4.6 & 10.0±4.1 & 10.6±5.4 & 12.5±4.4 & 12.5±3.4 & 16.2±6.1 & \red{20.0±4.2} \\
 & 20\_vs\_20 & 37.5±4.4 & 6.2±2.0 & 11.9±4.1 & 32.5±8.1 & 23.8±2.5 & 36.2±5.1 & \red{47.5±7.8} \\
 & 20\_vs\_23 & 13.8±1.5 & 1.2±1.5 & 0.0±0.0 & 12.5±5.6 & 11.2±7.8 & 12.5±8.1 & \red{13.8±5.8} \\
\midrule
\multirow{5}{*}{Terran} & 5\_vs\_5 & 30.0±4.2 & 12.5±6.2 & 9.4±7.9 & 23.1±5.8 & 14.4±4.7 & 28.1±4.4 & \red{30.6±8.2} \\
 & 10\_vs\_10 & 29.4±5.8 & 6.9±6.1 & 9.4±5.6 & 16.9±5.8 & 15.0±4.6 & 29.4±3.2 & \red{32.5±5.8} \\
 & 10\_vs\_11 & 16.2±3.6 & 3.8±4.6 & 7.5±6.4 & 5.0±4.2 & 9.4±5.6 & 12.5±5.2 & \red{19.4±5.4} \\
 & 20\_vs\_20 & 26.2±10.4 & 5.0±3.2 & 10.6±4.2 & 15.6±3.4 & 7.5±7.3 & 21.9±4.4 & \red{29.4±3.8} \\
 & 20\_vs\_23 & 4.4±4.2 & 0.0±0.0 & 0.0±0.0 & 7.5±6.1 & 5.0±4.2 & 4.4±2.5 & \red{9.4±5.2} \\
\midrule
\multirow{5}{*}{Zerg} & 5\_vs\_5 & 26.9±10.0 & 14.4±4.2 & 14.4±5.8 & 18.8±7.1 & 13.8±6.1 & 21.9±5.9 & \red{31.2±7.7} \\
 & 10\_vs\_10 & 25.0±2.8 & 5.6±4.6 & 5.6±4.6 & 15.6±7.4 & 19.4±2.3 & 23.8±6.4 & \red{33.8±11.8} \\
 & 10\_vs\_11 & 13.8±4.7 & 9.4±5.2 & 6.2±4.4 & 10.6±6.7 & 10.6±3.8 & 13.8±6.7 & \red{19.4±3.6} \\
 & 20\_vs\_20 & 8.1±1.5 & 2.5±1.2 & 1.2±1.5 & 10.0±7.8 & \red{12.5±4.4} & 10.0±2.3 & 9.4±6.2 \\
 & 20\_vs\_23 & 7.5±3.2 & 0.6±1.3 & 1.2±1.5 & 7.5±3.2 & 3.8±2.3 & 4.4±4.2 & \red{11.2±4.2} \\
\bottomrule
\end{tabular}
}
\caption{Comparison of winrates for ComaDICE and baselines across SMACv2 tasks.
}
\label{SMAC:main:winrate}
\end{table} 
We used experimental results contributed by the authors of OMIGA \citep{wang2024offline_OMIGA} as our baselines. They provided both the results and source code for all the baseline methods. This source code was also employed to run these baselines for the SMACv2 environment. All hyperparameters were kept at their default settings, and each experiment was conducted with\textit{ five different random seeds } to ensure robustness and reproducibility of the results.

\subsection{Main Comparison}
We now present a comprehensive evaluation of our proposed algorithm, ComaDICE, against several baseline methods in offline MARL. The baselines selected for comparison include both standard and SOTA approaches, providing a robust benchmark to assess the effectiveness of ComaDICE.

Our evaluation focuses on two primary metrics: returns and winrates. Returns are the average rewards accumulated by the agents across multiple trials, providing a measure of policy effectiveness. Winrates, applicable in competitive environments such as SMACv1 and SMACv2, indicate the success rate of agents against opponents, reflecting the algorithm's robustness in adversarial settings.

The experimental results, summarized in Tables \ref{SMAC:main:winrate} and \ref{MaMujoco:main:return}, demonstrate that ComaDICE consistently achieves superior performance compared to baseline methods across a range of scenarios. Notably, ComaDICE excels in complex tasks, highlighting its ability to effectively manage distributional shifts in challenging environments.

Figures \ref{SMAC:main:curves:winrates} illustrates the learning curves for each algorithm, showing that ComaDICE not only outperforms other methods in terms of mean returns but also exhibits lower variance, indicating more stable and reliable performance. These findings underscore the robustness and adaptability of ComaDICE, setting a new benchmark for offline MARL.

\begin{table}[t!]\small
\small
\centering
\resizebox{\textwidth}{!}{
\begin{tabular}{cc|ccccc}
\toprule
\multicolumn{2}{c|}{\multirow{2}{*}{\textbf{Instances}}} & \multirow{2}{*}{\textbf{BCQ}} & \multirow{2}{*}{\textbf{CQL}} & \multirow{2}{*}{\textbf{ICQ}} & \multirow{2}{*}{\textbf{OMIGA}} & \textbf{ComaDICE} \\
\multicolumn{2}{c|}{} &  &  &  &  & (ours) \\
\midrule
\multirow{4}{*}{Hopper} & expert & 77.9±58.0 & 159.1±313.8 & 754.7±806.3 & 859.6±709.5 & \red{2827.7±62.9} \\
 & medium & 44.6±20.6 & 401.3±199.9 & 501.8±14.0 & \red{1189.3±544.3} & 822.6±66.2 \\
 & m-replay & 26.5±24.0 & 31.4±15.2 & 195.4±103.6 & 774.2±494.3 & \red{906.3±242.1} \\
 & m-expert & 54.3±23.7 & 64.8±123.3 & 355.4±373.9 & 709.0±595.7 & \red{1362.4±522.9} \\
\midrule
\multirow{4}{*}{Ant} & expert & 1317.7±286.3 & 1042.4±2021.6 & 2050.0±11.9 & 2055.5±1.6 & \red{2056.9±5.9} \\
 & medium & 1059.6±91.2 & 533.9±1766.4 & 1412.4±10.9 & 1418.4±5.4 & \red{1425.0±2.9} \\
 & m-replay & 950.8±48.8 & 234.6±1618.3 & 1016.7±53.5 & 1105.1±88.9 & \red{1122.9±61.0} \\
 & m-expert & 1020.9±242.7 & 800.2±1621.5 & 1590.2±85.6 & 1720.3±110.6 & \red{1813.9±68.4} \\
\midrule
\multirow{4}{*}{\shortstack{Half\\Cheetah}} & expert & 2992.7±629.7 & 1189.5±1034.5 & 2955.9±459.2 & 3383.6±552.7 & \red{4082.9±45.7} \\
 & medium & 2590.5±1110.4 & 1011.3±1016.9 & 2549.3±96.3 & \red{3608.1±237.4} & 2664.7±54.2 \\
 & m-replay & -333.6±152.1 & 1998.7±693.9 & 1922.4±612.9 & 2504.7±83.5 & \red{2855.0±242.2} \\
 & m-expert & 3543.7±780.9 & 1194.2±1081.0 & 2834.0±420.3 & 2948.5±518.9 & \red{3889.7±81.6} \\
\bottomrule
\end{tabular}
}
\caption{Average returns for ComaDICE and baselines on MaMuJoCo benchmarks.
}
\label{MaMujoco:main:return}
\end{table}

\begin{figure}[t!]
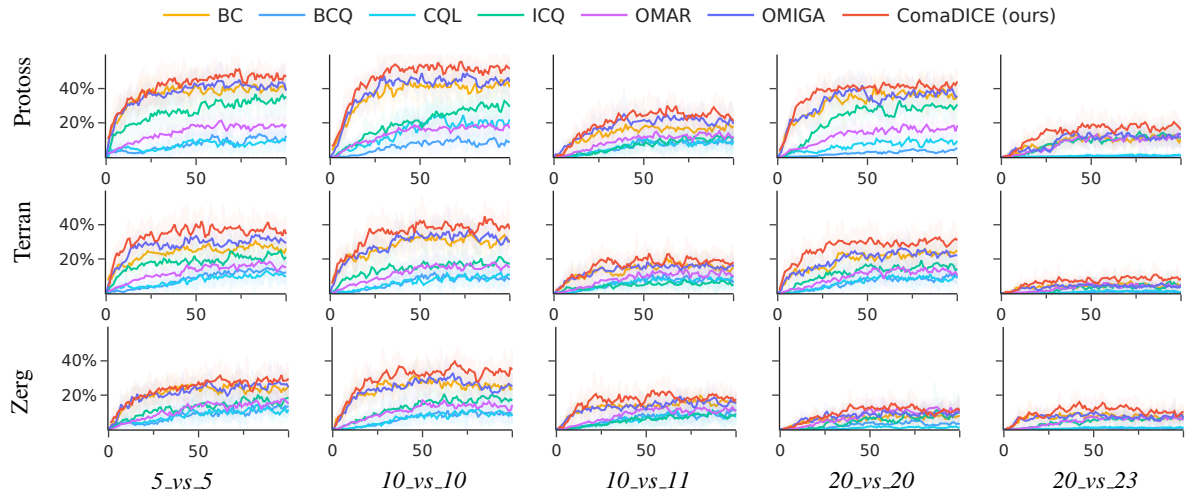
\small
\centering
\showlegend[0.8]{10}{6.8}
\\
\rotatebox{90}{Protoss} \hspace{4pt}
\showinstance[0.20]{0}{protoss_5_vs_5_winrates}{}
\showinstance[0.17]{20}{protoss_10_vs_10_winrates}{}
\showinstance[0.17]{20}{protoss_10_vs_11_winrates}{}
\showinstance[0.17]{20}{protoss_20_vs_20_winrates}{}
\showinstance[0.17]{20}{protoss_20_vs_23_winrates}{}
\\
\rotatebox{90}{Terran} \hspace{4pt}
\showinstance[0.20]{0}{terran_5_vs_5_winrates}{}
\showinstance[0.17]{20}{terran_10_vs_10_winrates}{}
\showinstance[0.17]{20}{terran_10_vs_11_winrates}{}
\showinstance[0.17]{20}{terran_20_vs_20_winrates}{}
\showinstance[0.17]{20}{terran_20_vs_23_winrates}{}
\\
\rotatebox{90}{Zerg} \hspace{4pt}
\showinstance[0.20]{0}{zerg_5_vs_5_winrates}{\emph{5\_vs\_5}}
\showinstance[0.17]{20}{zerg_10_vs_10_winrates}{\emph{10\_vs\_10}}
\showinstance[0.17]{20}{zerg_10_vs_11_winrates}{\emph{10\_vs\_11}}
\showinstance[0.17]{20}{zerg_20_vs_20_winrates}{\emph{20\_vs\_20}}
\showinstance[0.17]{20}{zerg_20_vs_23_winrates}{\emph{20\_vs\_23}}
\\
\caption{Evaluation curves of ComaDICE and baselines over time on SMACv2 tasks.}
\label{SMAC:main:curves:winrates}
\end{figure}

\subsection{Ablation Study -  Impact of the Regularization Parameter Alpha}
We investigate how varying the regularization parameter alpha ($\alpha$) affects the performance of our ComaDICE algorithm. The parameter $\alpha$ is crucial for balancing the trade-off between maximizing rewards and penalizing deviations from the offline dataset's distribution. We conducted experiments with $\alpha$ values ranging from $\{0.01, 0.1, 1, 10, 100\}$, evaluating performance using winrates in the SMACv2 environment and returns in the MaMujoco environment. These results, illustrated in Figure \ref{boxes:main:returns}, highlight the sensitivity of ComaDICE to different $\alpha$ values. In particular, we observe that ComaDICE achieves optimal performance when $\alpha$ is around 10, suggesting that the stationary distribution regularizer plays a essential role in the success of our algorithm. 

In our appendix, we provide additional ablation studies to analyze the performance of our algorithm using different forms of f-divergence functions, as well as comparisons between 1-layer and 2-layer mixing network structures. The appendix also includes proofs of the theoretical claims made in the main paper, details of our experimental settings, and other experimental information.


\begin{figure}[t!]
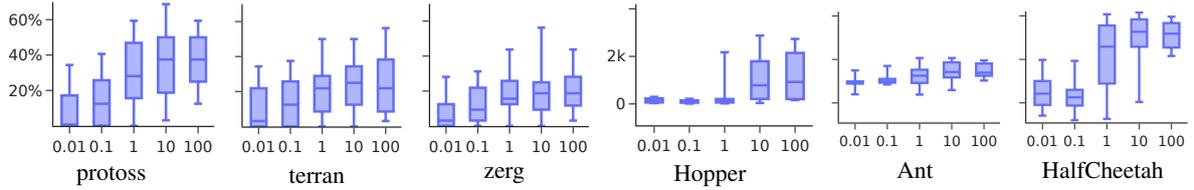
\small
\small
\centering
\showinstance[0.17]{0}{protoss_box_winrates}{protoss}
\showinstance[0.14]{20}{terran_box_winrates}{terran}
\showinstance[0.14]{20}{zerg_box_winrates}{zerg}
\showinstance[0.17]{0}{Hopper-v2_box_returns}{Hopper}
\showinstance[0.14]{12}{Ant-v2_box_returns}{Ant}
\showinstance[0.14]{12}{HalfCheetah-v2_box_returns}{HalfCheetah}
\caption{Impact of regularization parameter $\alpha$ on performance in different environments.}
\label{boxes:main:returns}
\end{figure}

\section{Conclusion, Future Work and Broader Impacts}

\textbf{{Conclusion.}} In this paper, we propose ComaDICE, a principled framework for offline MARL. Our algorithm incorporates a stationary distribution shift regularizer into the standard MARL objective to address the conventional distribution shift issue in offline RL. To facilitate training within a CTDE framework, we decompose both the global value and advantage functions using a mixing network. We demonstrate that, under our mixing architecture, the main objective function is concave in the value function, which is crucial for ensuring stable and efficient training. The results of this training are then utilized to derive local policies through a weighted BC approach, ensuring consistency between global and local policy optimization. Extensive experiments on SOTA benchmark tasks, including SMACv2, show that ComaDICE outperforms other baseline methods. 

\textbf{{Limitations and Future Work:} }There are some limitations that are not addressed within the scope of this paper. For instance, we focus solely on cooperative learning, leaving open the question of how the approach would perform in cooperative-competitive settings. Extending ComaDICE to such scenarios would require considerable effort and is an interesting direction for future research. Additionally, in our training objective, the DICE term is designed to reduce the divergence between the learning policy and the behavior policy. As a result, the performance of the algorithm is heavily dependent on the quality of the behavior policy. Although this reliance may be unavoidable, future research should focus on mitigating the influence of the behavior policy on training outcomes. Furthermore, our algorithm, like other baselines, still requires a large amount of data to achieve desirable learning outcomes. Improving sample efficiency would be another valuable area for future research. 

\textbf{{Broader Impacts:}} The development of an offline MARL algorithm using a stationary distribution shift regularizer could lead to improved performance in tasks where real-time interaction is costly, such as robotics, autonomous driving, and healthcare. It could also promote safer exploration and wider adoption of offline learning in high-stakes environments. On the negative side, since the algorithm relies heavily on the behavior policy, if the behavior policy is flawed or biased, the performance of the learnt policy could also suffer. This could reinforce preexisting biases or suboptimal behaviors in real-world applications. Moreover, like any AI technology, there is a risk of the algorithm being applied in unintended or harmful ways, such as in surveillance or military applications, where multi-agent systems could be used to manipulate environments or people without adequate oversight.

\section*{Ethical Statement}

Our work introduces ComaDICE, a framework for offline MARL, aimed at improving training stability and policy optimization in complex multi-agent environments. While this research has significant potential for positive applications, particularly in domains such as autonomous systems, resource management, and multi-agent simulations, it is crucial to address the ethical implications and risks associated with this technology.

The deployment of reinforcement learning systems in real-world, multi-agent settings raises concerns about unintended behaviors, especially in safety-critical domains. If the policies learned by ComaDICE are applied without proper testing and validation, they may lead to undesirable or harmful outcomes, especially in areas such as autonomous driving, healthcare, or robotics. Additionally, bias in the training data or simulation environments could result in suboptimal policies that unfairly impact certain agents or populations, potentially leading to ethical concerns regarding fairness and transparency.

To mitigate these risks, we emphasize the need for extensive testing and validation of policies generated using ComaDICE, particularly in real-world environments where the consequences of errors could be severe. It is also essential to ensure that the datasets and simulations used in training are representative, unbiased, and carefully curated. We encourage practitioners to use human oversight and collaborate with domain experts to ensure that ComaDICE is applied responsibly, particularly in high-stakes settings.

\section*{Reproducibility Statement}
In order to facilitate reproducibility, we have submitted the source code for ComaDICE, along with the datasets utilized to produce the experimental results presented in this paper (all these will be made publicly available if the paper gets accepted). 
Additionally, in the appendix, we provide details of our algorithm, including key implementation steps and details needed to replicate the results. The hyper-parameter settings for all experiments are also included to ensure that others can reproduce the findings under the same experimental conditions. We invite the research community to explore and apply the ComaDICE framework in various environments to further validate and expand upon the results reported in this work.

\bibliography{refs}
\bibliographystyle{iclr2025_conference}

\newpage
\appendix
\addtocontents{toc}{\protect\setcounter{tocdepth}{2}}

 \begin{center}
     {\huge APPENDIX}
 \end{center}
Our appendix includes the following:
\begin{itemize}
    \item Proofs of the theoretical claims presented in the main paper.
    \item Details of our experimental settings.
    \item Detailed numerical results from the ablation study investigating the impact of $\alpha$ on ComaDICE's performance.
    \item An ablation study assessing ComaDICE's performance with different forms of f-divergence functions.
    \item An ablation study comparing ComaDICE's performance using 1-layer versus 2-layer mixing networks.
\end{itemize}
\tableofcontents

\newpage
\section{Missing Proofs}

\subsection{Proof of  Proposition \ref{prop.1}}
\noindent  \textbf{Proposition.} \textit{  The   minimax problem in \ref{prob:minimax-w-nu} is equivalent to 
$    \min_{\nu^{tot}} ~~ \left\{\widetilde{\cL}(\nu^{tot})\right\}$,  where
 \[
\widetilde{\cL}(\nu^{tot}) = (1 - \gamma)\bbE_{\bs \sim p_0}[\nu^{tot}(\bs)] + \bbE_{(\bs, \ba) \sim \dmutot}\left[\alpha  f^*\left(\frac{A^{tot}_{\nu}(\bs, \ba)}{\alpha}\right)\right]
 \]
where $f^*$ is  convex conjugate of $f$, i.e., $f^*(y) = \sup_{t\geq 0} \{ty - f(t)\}$. Moreover, if  $\nu^{tot}$ is parameterized by $\theta$, the first order derivative of $\widetilde{\cL}(\nu^{tot})$ w.r.t. $\theta$ is given as
 \[
 \nabla_{\theta}  \widetilde{\cL}(\nu^{tot}) = (1 - \gamma)\bbE_{\bs \sim p_0}[\nabla_\theta \nu^{tot}(\bs)] + \bbE_{(\bs, \ba) \sim \dmutot}\left[\nabla_\theta A^{tot}_{\nu}(\bs, \ba)  w^{tot*}_{\nu}(\bs,\ba) \right]
 \]
where   $w^{tot*}_{\nu}(s,a)  = \max \{0, {f'}^{-1}(A^{tot}_\nu(\bs,\ba)/\alpha)\}$, where $f'^{-1}(\cdot)$ is the inverse function of the first-order derivative of $f$.}

\begin{proof}
We write the Lagrange dual function as:
\begin{align}
    \cL(\nu^{tot}, & \dtot) = \bbE_{(\bo, \ba) \sim \rho^{\bpi_{tot}}}\left[r(\bo, \ba)\right] 
    - \alpha \bbE_{(\bs, \ba) \sim \dmutot}\left[f\left(\frac{\dtot(\bs, \ba)}{\dmutot(\bs, \ba)}\right)\right] \nonumber\\
    & - \sum_{\bs}\nu^{tot}(\bs) \left(\sum_{\ba'} \dtot(\bs, \ba') - (1 - \gamma) p_0(\bs) - \gamma \sum_{\ba', \bs'} \dtot(\bs', \ba') P(\bs | \ba', \bs')\right)\nonumber \\
    & = \sum_{\bs}\nu^{tot}(\bs) (1 - \gamma) p_0(\bs) - \alpha \bbE_{(\bs, \ba) \sim \dmutot}\left[f\left(\frac{\dtot(\bs, \ba)}{\dmutot(\bs, \ba)}\right)\right] \nonumber \\
    & ~~~~~~+\sum_{\bs, \ba}  \dmutot(\bs,\ba)\left(r(\bo,\ba) +  \gamma \bbE_{\bs'\sim P(\cdot|\bs,\ba)}  \nu^{tot}(\bs') - \nu^{tot}(\bs)\right)\nonumber\\
    & = (1 - \gamma)\bbE_{\bs \sim p_0}[\nu^{tot}(\bs)] + \bbE_{(\bs, \ba) \sim \dmutot}\left[-\alpha f\left(w^{tot}_\nu(\bs, \ba)\right) + w^{tot}_\nu(\bs, \ba) A^{tot}_{\nu}(\bs, \ba)\right]
\end{align}
where $w^{tot}_\nu(\bs, \ba) = \frac{\dtot(\bs, \ba)}{\dmutot(\bs, \ba)}$. We now see that, for each $(\bs,\ba)$, each component $-\alpha f\left(w^{tot}_\nu(\bs, \ba)\right) + w^{tot}_\nu(\bs, \ba) A^{tot}_{\nu}(\bs, \ba)$ is maximized at:
\[
\max_{w^{tot}\geq 0} -\alpha f\left(w^{tot}_\nu(\bs, \ba)\right) + w^{tot}_\nu(\bs, \ba) A^{tot}_{\nu}(\bs, \ba) =   f^*\left( \frac{A^{tot}_{\nu}(\bs, \ba) }{\alpha}\right)
\]
where $f^*$ is the (variant) convex conjugate of the convex function $f$. We then obtain:
\[
\max_{w^{tot}\geq 0}  \cL(\nu^{tot},w^{tot}) = \widetilde{\cL}(\nu^{tot}) =   (1 - \gamma)\bbE_{\bs \sim p_0}[\nu^{tot}(\bs)] + \bbE_{(\bs, \ba) \sim \dmutot}\left[\alpha  f^*\left(\frac{A^{tot}_{\nu}(\bs, \ba)}{\alpha}\right)\right]
\]
Moreover, consider the maximization problem $ \max_{w^{tot}\geq 0} T(w^{tot}(\bs,\ba)) =  -\alpha f\left(w^{tot}_\nu(\bs, \ba)\right) + w^{tot}_\nu(\bs, \ba) A^{tot}_{\nu}(\bs, \ba)$. Taking its first-order derivative w.r.t $w^{tot}(\bs,\ba)$ yields:
\[
-\alpha f'(w^{tot}(\bs,\ba)) + A^{tot}_{\nu}(\bs, \ba)
\]
So, if  $f'^{-1}\left(\frac{A^{tot}_{\nu}(\bs, \ba)}{\alpha}\right) \geq 0$, then  $w^{tot*}(\bs,\ba) = f'^{-1}\left(\frac{A^{tot}_{\nu}(\bs, \ba)}{\alpha}\right) \geq 0$ is optimal for the maximization problem. Otherwise, if $ f'^{-1}\left(\frac{A^{tot}_{\nu}(\bs, \ba)}{\alpha}\right) < 0$, we see that $T(w^{tot}(\bs,\ba))$ is increasing when $w^{tot}(\bs,\ba) \leq f'^{-1}\left(\frac{A^{tot}_{\nu}(\bs, \ba)}{\alpha}\right)$ and decreasing when $w^{tot}(\bs,\ba) \geq f'^{-1}\left(\frac{A^{tot}_{\nu}(\bs, \ba)}{\alpha}\right)$, implying that the maximization problem has an optimal solution at $w^{tot*}(\bs,\ba) = 0$. So, putting all together, $w^{tot*}_{\nu}(s,a)  = \max \{0, {f'}^{-1}(A^{tot}_\nu(\bs,\ba)/\alpha)\}$  is optimal for the maximization problem $ \max_{w^{tot}\geq 0} T(w^{tot}(\bs,\ba))$.

To get derivatives of $\widetilde{\cL}(\nu^{tot})$, we note that, for any $y \in \bbR$, $\nabla f^*(y) =   t^*$, where $y^* = \text{argmax}_{t\geq 0} (ty -  f(t))$. Thus, the first-order derivative of $f^*\left(\frac{A^{tot}_{\nu}(\bs, \ba)}{\alpha}\right)$ can be computed as:
\[
\nabla_\theta f^*\left(\frac{A^{tot}_{\nu}(\bs, \ba)}{\alpha}\right) =  \frac{\nabla_\theta A^{tot}_{\nu}(\bs, \ba)}{\alpha} w^{tot^*}(\bs,\ba)
\]
which implies:
\[
 \nabla_{\theta}  \widetilde{\cL}(\nu^{tot}) = (1 - \gamma)\bbE_{\bs \sim p_0}[\nabla_\theta \nu^{tot}(\bs)] + \bbE_{(\bs, \ba) \sim \dmutot}\left[\nabla_\theta A^{tot}_{\nu}(\bs, \ba) w^{tot*}_{\nu}(\bs,\ba) \right]
\]
we complete the proof.
\end{proof}

\subsection{Proof of Theorem \ref{th:convex} }
\noindent \textbf{Theorem}. Assume the mixing network $\cM_\theta[\cdot]$ is constructed with non-negative weights and convex activations, then $\widetilde{\cL}(\bnu,\theta)$ is convex in $\bnu$.

\begin{proof}
    We first  introduce the following lemma, which is essential to validate the convexity of $\widetilde{\cL}(\bnu, \theta)$.
    \begin{lemma}\label{lemma.1}
        If the mixing network are multi-level feed-forward, constructed with non-negative  weights   and convex activations, then $\cM_\theta [\bnu(\bs)]$  and $\cM_\theta[\bq(\bs,\ba) - \bnu(\bs)]$  are convex in $\bnu$ 
    \end{lemma}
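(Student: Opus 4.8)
The plan is to prove the lemma by induction on the depth of the mixing network, relying on two elementary composition rules from convex analysis: (i) a non-negative linear combination of convex functions is convex, and (ii) if an outer map is convex and non-decreasing while the inner map is convex, then their composition is convex. The key starting observation is that both arguments fed into $\cM_\theta[\cdot]$ are affine in $\bnu$: the first argument $\bnu(\bs)$ is the identity map, and the second, $\bq(\bs,\ba) - \bnu(\bs)$, is affine because $\bq(\bs,\ba)$ is treated as a fixed input. Since convexity is preserved under precomposition with an affine map, it suffices to show that the scalar output of the network is convex as a function of its raw input vector $\bx \in \bbR^n$; the two claims then follow by substituting $\bx = \bnu(\bs)$ and $\bx = \bq(\bs,\ba) - \bnu(\bs)$ respectively.

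First I would fix notation for the layered computation. Write $z^{(0)} = \bx$ and $z^{(k+1)} = \sigma_k\big(W^{(k)} z^{(k)} + b^{(k)}\big)$, where each weight matrix $W^{(k)}$ has non-negative entries and $\sigma_k$ is a convex activation applied componentwise, so that $\cM_\theta[\bx]$ is the final-layer output $z^{(L)}$. The inductive hypothesis is that every component of $z^{(k)}$ is convex in $\bx$. For the base case, $z^{(0)} = \bx$ has affine (hence convex) components. For the inductive step, each pre-activation entry $(W^{(k)} z^{(k)} + b^{(k)})_j = \sum_l W^{(k)}_{jl}\, z^{(k)}_l + b^{(k)}_j$ is a non-negative combination of convex functions plus a constant, hence convex by rule (i); applying $\sigma_k$ then preserves convexity by rule (ii). This shows $z^{(k+1)}$ is convex componentwise, and the induction establishes that $\cM_\theta[\bx]$ is convex in $\bx$.

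The main obstacle — and the step I expect to require the most care — is the use of rule (ii): the scalar composition rule needs the activation $\sigma_k$ to be not merely convex but also \emph{non-decreasing}, since composing a convex-but-nonmonotone outer function with a convex inner function can fail to be convex (e.g. $t \mapsto t^2$ composed with a convex function). I would therefore make explicit that the convex activations under consideration (ReLU, ELU, softplus, leaky ReLU with non-negative slope, Maxout) are all non-decreasing, so that rule (ii) is legitimately applicable at every layer; I would also note that the non-negativity of the weights is what is genuinely needed from the second layer onward, where truly convex (non-affine) quantities are being combined. Finally, since $\bnu \mapsto \bnu(\bs)$ and $\bnu \mapsto \bq(\bs,\ba) - \bnu(\bs)$ are affine, convexity in $\bx$ transfers directly to convexity in $\bnu$, which completes the proof of the lemma.
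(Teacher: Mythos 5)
Your proof is correct and follows essentially the same route as the paper's: an induction over the layers of the feed-forward network, using that non-negative combinations of convex functions are convex and that composing with a convex, non-decreasing activation preserves convexity, followed by the observation that both inputs to the mixer are affine in $\bnu$. Your explicit remark that the activations must be non-decreasing (not merely convex) is a point the paper's proof also relies on, though only implicitly in its inequality chain, so there is no substantive difference in approach.
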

    \begin{proof}
To simplify the proof, we first prove a general result stating that if $\cM_\theta [\bX]$ is a multi-level feed-forward network with non-negative weights and convex activations, then $\cM_\theta [\bX]$ is convex in $\bX$. To start, we note that any $N$-layer feed-forward network with input $\bX$ can be defined recursively as
\begin{align}
    F^0(\bX) &= \bX \\
    F^n(\bX) &= \sigma^n\Big(F^{n-1}(\bX)\Big) \times W_n + b_n,~ n = 1,\ldots,N
\end{align}
where $\sigma^n$ is a set of activation functions applied to each element of vector $F^{n-1}(\bX)$, and $W_n$ and $b_n$ are the weights and biases, respectively, at layer $n$. Therefore, we will prove the result by induction, i.e., $F^n(\bX)$ is convex and non-decreasing in $\bX$ for $n = 0,\ldots$. Here we note that $F^n(\bX)$ is a vector, so when we say ``\textit{$F^n(\bX)$ is convex and non-decreasing in $\bX$},'' it means each element of $F^n(\bX)$ is convex and non-decreasing in $\bX$.

We first see that the claim indeed holds for $n = 0$. Now let us assume that $F^{n-1}(\bX)$ is convex and non-decreasing in $\bX$; we will prove that $F^n(\bX)$ is also convex and non-decreasing in $\bX$. The non-decreasing property can be easily verified as we can see, given two vectors $\bX$ and $\bX'$ such that $\bX \geq \bX'$ (element-wise comparison), we have the following chain of inequalities:
\begin{align*}
    F^{n-1}(\bX) &\stackrel{(a)}{\geq} F^{n-1}(\bX') \\
    \sigma^n(F^{n-1}(\bX)) &\stackrel{(b)}{\geq} \sigma^n(F^{n-1}(\bX')) \\
    \sigma^n(F^{n-1}(\bX)) \times W_n + b_n &\stackrel{(c)}{\geq} \sigma^n(F^{n-1}(\bX')) \times W_n + b_n
\end{align*}

where $(a)$ is due to the induction assumption that $F^{n-1}(\bX)$ is non-decreasing in $\bX$, $(b)$ is because $\sigma^n$ is also non-decreasing, and $(c)$ is because the weights $W_n$ are non-negative.

To verify the convexity of $F^n(\bX)$, we will show that for any $\bX, \bX'$, and any scalar $\alpha \in (0,1)$, the following holds:
\begin{equation}
    \alpha F^n(\bX) + (1-\alpha) F^n(\bX) \geq F^n(\alpha \bX + (1-\alpha) \bX')
\end{equation}
To this end, we write:
\begin{align*}
    \alpha F^n(\bX) + (1-\alpha) F^n(\bX') &= \Big(\alpha \sigma^n(F^{n-1}(\bX)) + (1-\alpha) \sigma^n(F^{n-1}(\bX'))\Big) \times W_n + b_n \\
    &\stackrel{(d)}{\geq} \Big(\sigma^n\Big(\alpha F^{n-1}(\bX) + (1-\alpha) F^{n-1}(\bX')\Big)\Big) \times W_n + b_n \\
    &\stackrel{(e)}{\geq} \Big(\sigma^n\Big(F^{n-1}(\alpha \bX + (1-\alpha) \bX')\Big)\Big) \times W_n + b_n \\
    &= F^n(\alpha \bX + (1-\alpha) \bX')
\end{align*}
where $(d)$ is due to the assumption that activation functions $\sigma^n$ are convex and $W_n \geq 0$, and $(e)$ is because $\alpha F^{n-1}(\bX) + (1-\alpha) F^{n-1}(\bX') \geq F^{n-1}(\alpha \bX + (1-\alpha) \bX')$ (because $F^{n-1}(\bX)$ is convex in $\bX$, by the induction assumption), and the activation functions $\sigma^n$ are non-decreasing and $W_n \geq 0$. So, we have:
\[
    \alpha F^n(\bX) + (1-\alpha) F^n(\bX') \geq F^n(\alpha \bX + (1-\alpha) \bX')
\]
implying that $F^n(\bX)$ is convex in $\bX$. We then complete the induction proof and conclude that $F^n(\bX)$ is convex and non-decreasing in $\bX$ for any $n = 0, \ldots, N$.

From the result above, since both $\bnu(\bs)$ and $\bq(\bs,\ba) - \bnu(\bs)$ are linear in $\bnu$, it follows that $\cM_\theta[\bnu(\bs)]$ and $\cM_\theta[\bq(\bs,\ba) - \bnu(\bs)]$ are convex with respect to $\bnu$.
\end{proof}

We are now ready to prove the convexity of $\widetilde{\cL}(\bnu, \theta)$ with respect to $\bnu$. Directly verifying the convexity of this function is challenging, as it involves some complicated components such as $f^*\left(\frac{\cM_{\theta} [\bq(\bs,\ba) - \bnu(\bs)]}{\alpha}\right)$, which is difficult to analyze. However, we recall that:
\[
\widetilde{\cL}(\bnu, \theta) = \max_{w^{tot} \geq 0} \cL(\bnu, \theta, w^{tot}),
\]
where
\[
\begin{aligned}
    \cL(\bnu, \theta, w^{tot}) &=  (1 - \gamma)\bbE_{\bs \sim p_0}[\cM_\theta[\bnu(\bs)]] \\
    &\quad + \bbE_{(\bs, \ba) \sim \dmutot}\left[-\alpha f\left(w^{tot}_\nu(\bs, \ba)\right) + w^{tot}_\nu(\bs, \ba) \cM_\theta[\bq(\bs, \ba) - \bnu(\bs)]\right].
\end{aligned}
\]
From Lemma~\ref{lemma.1}, we know that $\cM_\theta[\bnu(\bs)]$ and $\cM_\theta[\bq(\bs, \ba) - \bnu(\bs)]$ are convex in $\bnu$, thus $\cL(\bnu, \theta, w^{tot})$ is also convex in $\bnu$. We now follow the standard approach to verify the convexity of $\widetilde{\cL}(\bnu, \theta)$ as follows. Let $\bnu^1$ and $\bnu^2$ be two feasible value functions. Given any $\beta \in (0,1)$, we will prove that:
\begin{equation}
   \beta \widetilde{\cL}(\bnu^1, \theta) + (1-\beta) \widetilde{\cL}(\bnu^2, \theta) \geq \widetilde{\cL}(\beta\bnu^1+(1-\beta)\bnu^2, \theta).\label{eq:cv}
\end{equation}
To see why this should hold, we recall that $\cL(\bnu, \theta, w^{tot})$ is convex in $\bnu$ and $\widetilde{\cL}(\bnu, \theta) = \max_{w^{tot} \geq 0} \cL(\bnu, \theta, w^{tot})$, leading to the following chain of inequalities:
\begin{align}
   \beta \widetilde{\cL}(\bnu^1, \theta) + (1-\beta) \widetilde{\cL}(\bnu^2, \theta) &= \beta \max_{w^{tot}} \cL(\bnu^1, \theta, w^{tot}) + (1-\beta) \max_{w^{tot}} \cL(\bnu^2, \theta, w^{tot}) \nonumber\\
   &\geq \max_{w^{tot}} \left\{\beta \cL(\bnu^1, \theta, w^{tot}) + (1-\beta) \cL(\bnu^2, \theta, w^{tot})\right\} \nonumber\\
   &\geq \max_{w^{tot}} \left\{\cL(\beta\bnu^1 + (1-\beta)\bnu^2, \theta, w^{tot})\right\} \nonumber\\
   &= \widetilde{\cL}(\beta\bnu^1 + (1-\beta)\bnu^2, \theta). \nonumber
\end{align}
The last inequality directly confirms Eq. \ref{eq:cv}, implying the convexity of $\widetilde{\cL}(\bnu, \theta)$ in $\bnu$, as desired.
\end{proof}

\subsection{Proof of  Proposition \ref{prop.consistency}}

\noindent \textbf{Proposition.}
  \textit{  Let \( \pi^* \) be the optimal solution to \ref{prob:pi-local}. Then \( \boldsymbol{\pi}^*_{tot}(\mathbf{a}|\mathbf{s}) = \prod_{i\in \mathcal{N}} \pi^*_i(a_i|o_i) \) is also optimal for the global weighted BC problem \ref{prob:pi-tot}.
}

\begin{proof}
 To prove that \( \boldsymbol{\pi}^*_{tot}(\mathbf{a}|\mathbf{s}) = \prod_{i\in \mathcal{N}} \pi^*_i(a_i|o_i) \) is optimal for the global WBC problem \ref{prob:pi-tot}, we need to verify that 
\[
\mathbb{E}_{(\bs,\ba) \sim \dmutot} \left[ w^{tot*}(\bs,\ba) \log \bpi_{tot}(\ba|\bs) \right] \leq \mathbb{E}_{(\bs,\ba) \sim \dmutot} \left[ w^{tot*}(\bs,\ba) \log \bpi^*_{tot}(\ba|\bs) \right]
\]
for any global policy \( \bpi_{tot} \in \Pi_{tot} \).

Since \( \bpi_{tot} \) is decomposable, there exist local policies \( \pi_i \) such that
\[
\bpi_{tot}(\ba|\bs) = \prod_{i\in \cN} \pi_i(a_i|o_i).
\]
As a result, we have the following inequalities:
\begin{align*}
    \mathbb{E}_{(\bs,\ba) \sim \dmutot} \left[ w^{tot*}(\bs,\ba) \log \bpi_{tot}(\ba|\bs) \right] 
    &= \mathbb{E}_{(\bs,\ba) \sim \dmutot} \left[ w^{tot*}(\bs,\ba) \sum_{i\in \cN} \log \pi_i(a_i|o_i) \right] \\
    &= \sum_{i\in \cN} \mathbb{E}_{(\bs,\ba) \sim \dmutot} \left[ w^{tot*}(\bs,\ba) \log \pi_i(a_i|o_i) \right] \\
    &\leq \sum_{i\in \cN} \max_{\pi'_i} \mathbb{E}_{(\bs,\ba) \sim \dmutot} \left[ w^{tot*}(\bs,\ba) \log \pi'_i(a_i|o_i) \right] \\
    &= \sum_{i\in \cN} \mathbb{E}_{(\bs,\ba) \sim \dmutot} \left[ w^{tot*}(\bs,\ba) \log \pi^*_i(a_i|o_i) \right] \\
    &= \mathbb{E}_{(\bs,\ba) \sim \dmutot} \left[ w^{tot*}(\bs,\ba) \log \bpi^*_{tot}(\ba|\bs) \right],
\end{align*}
which directly implies that \( \bpi^*_{tot} \) is optimal for the global WBC problem \ref{prob:pi-tot}.

\end{proof}

\clearpage
\section{Additional Details}

\subsection{Offline Multi-Agent Datasets}

\begin{table}[H]
\small
\centering
\resizebox{\textwidth}{!}{
\begin{tabular}{cc|ccccccc}
\toprule
\multicolumn{2}{c|}{\multirow{2}{*}{\textbf{Instances}}} & \multirow{2}{*}{\textbf{Trajectories}} & \multirow{2}{*}{\textbf{Samples}} & \multirow{2}{*}{\textbf{Agents}} & \textbf{State} & \textbf{Obs} & \textbf{Action} & \textbf{Average} \\
\multicolumn{2}{c|}{} &  &  &  & \textbf{dim} & \textbf{dim} & \textbf{dim} & \textbf{returns} \\
\midrule
\multirow{3}{*}{2c\_vs\_64zg} & poor & 0.3K & 21.7K & 2 & 675 & 478 & 70 & 8.9±1.0 \\
    & medium & 1.0K & 75.9K & 2 & 675 & 478 & 70 & 13.0±1.4 \\
    & good & 1.0K & 118.4K & 2 & 675 & 478 & 70 & 19.9±1.3 \\
\midrule
\multirow{3}{*}{5m\_vs\_6m} & poor & 1.0K & 113.7K & 5 & 156 & 124 & 12 & 8.5±1.2 \\
    & medium & 1.0K & 138.6K & 5 & 156 & 124 & 12 & 11.0±0.6 \\
    & good & 1.0K & 138.7K & 5 & 156 & 124 & 12 & 20.0±0.0 \\
\midrule
\multirow{3}{*}{6h\_vs\_8z} & poor & 1.0K & 145.5K & 6 & 213 & 172 & 14 & 9.1±0.8 \\
    & medium & 1.0K & 177.1K & 6 & 213 & 172 & 14 & 12.0±1.3 \\
    & good & 1.0K & 228.2K & 6 & 213 & 172 & 14 & 17.8±2.1 \\
\midrule
\multirow{3}{*}{corridor} & poor & 1.0K & 307.6K & 6 & 435 & 346 & 30 & 4.9±1.7 \\
    & medium & 1.0K & 756.1K & 6 & 435 & 346 & 30 & 13.1±1.3 \\
    & good & 1.0K & 601.0K & 6 & 435 & 346 & 30 & 19.9±1.0 \\
\midrule
\multirow{5}{*}{Protoss} & 5\_vs\_5 & 1.0K & 60.8K & 5 & 130 & 92 & 11 & 16.8±6.3 \\
    & 10\_vs\_10 & 1.0K & 68.3K & 10 & 310 & 182 & 16 & 15.7±5.2 \\
    & 10\_vs\_11 & 1.0K & 62.9K & 10 & 327 & 191 & 17 & 15.3±5.7 \\
    & 20\_vs\_20 & 1.0K & 76.7K & 20 & 820 & 362 & 26 & 16.2±4.7 \\
    & 20\_vs\_23 & 1.0K & 65.0K & 20 & 901 & 389 & 29 & 14.0±4.5 \\
\midrule
\multirow{5}{*}{Terran} & 5\_vs\_5 & 1.0K & 47.6K & 5 & 120 & 82 & 11 & 15.2±7.2 \\
    & 10\_vs\_10 & 1.0K & 56.4K & 10 & 290 & 162 & 16 & 14.7±6.2 \\
    & 10\_vs\_11 & 1.0K & 52.5K & 10 & 306 & 170 & 17 & 12.1±5.7 \\
    & 20\_vs\_20 & 1.0K & 63.0K & 20 & 780 & 322 & 26 & 14.0±6.0 \\
    & 20\_vs\_23 & 1.0K & 51.3K & 20 & 858 & 346 & 29 & 11.7±5.7 \\
\midrule
\multirow{5}{*}{Zerg} & 5\_vs\_5 & 1.0K & 27.5K & 5 & 120 & 82 & 11 & 10.4±5.0 \\
    & 10\_vs\_10 & 1.0K & 31.9K & 10 & 290 & 162 & 16 & 14.7±6.0 \\
    & 10\_vs\_11 & 1.0K & 30.9K & 10 & 306 & 170 & 17 & 12.0±5.1 \\
    & 20\_vs\_20 & 1.0K & 35.4K & 20 & 780 & 322 & 26 & 12.3±4.2 \\
    & 20\_vs\_23 & 1.0K & 32.8K & 20 & 858 & 346 & 29 & 10.8±4.0 \\
\midrule
\multirow{4}{*}{Hopper} & expert & 1.5K & 999K & 3 & 42 & 14 & 1 & 2452.0±1097.9 \\
    & medium & 4.0K & 915K & 3 & 42 & 14 & 1 & 723.6±211.7 \\
    & m-replay & 4.2K & 1311K & 3 & 42 & 14 & 1 & 746.4±671.9 \\
    & m-expert & 5.5K & 1914K & 3 & 42 & 14 & 1 & 1190.6±973.4 \\
\midrule
\multirow{4}{*}{Ant} & expert & 1.0K & 1000K & 2 & 226 & 113 & 4 & 2055.1±22.1 \\
    & medium & 1.0K & 1000K & 2 & 226 & 113 & 4 & 1418.7±37.0 \\
    & m-replay & 1.8K & 1750K & 2 & 226 & 113 & 4 & 1029.5±141.3 \\
    & m-expert & 2.0K & 2000K & 2 & 226 & 113 & 4 & 1736.9±319.6 \\
\midrule
\multirow{4}{*}{\shortstack{Half\\Cheetah}} & expert & 1.0K & 1000K & 6 & 138 & 23 & 1 & 2785.1±1053.1 \\
    & medium & 1.0K & 1000K & 6 & 138 & 23 & 1 & 1425.7±520.1 \\
    & m-replay & 1.0K & 1000K & 6 & 138 & 23 & 1 & 655.8±590.4 \\
    & m-expert & 2.0K & 2000K & 6 & 138 & 23 & 1 & 2105.4±1073.2 \\
\midrule
\end{tabular}
}
\caption{Overview of datasets used in experiments, including details of trajectories, samples, agent counts, and state, observation, and action space dimensions across SMACv1, SMACv2, and MaMujoco environments, with average returns indicating performance levels.}
\label{data_infor}
\end{table}

\subsection{Implementation Details}
Our experiments were implemented using PyTorch and executed in parallel on a single NVIDIA® H100 NVL Tensor Core GPU. Our study required running a large number of sub-tasks, specifically 1,365 in total (i.e., 39 instances across 7 algorithms with 5 different random seeds each). 

\begin{figure}[H]
  \centering
  \includegraphics[width=0.99\textwidth]{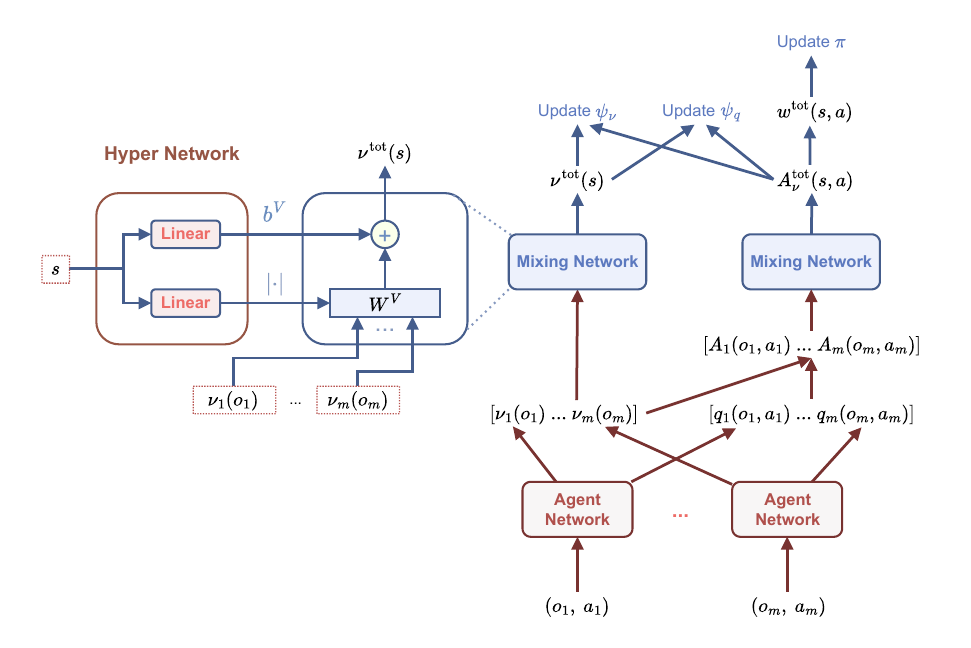}
  \caption{Our ComaDICE model architecture.}
\end{figure}

The offline datasets for each instance are substantial, reaching sizes of up to 7.4 GB. To manage this, we developed a preprocessing step designed to optimize data handling and improve computational efficiency. This process involves reading all transitions from each dataset and combining individual trajectory files into a single large NumPy object that contains batches of trajectories. In this step, we define the data type for each element, such as states (float32), actions (int64), and dones (bool), ensuring consistent and efficient data storage. The processed data is then saved into a compressed NumPy file, which significantly boosts computing performance.

Despite these optimizations, loading the entire dataset still requires a large amount of RAM. By leveraging parallel processing and efficient data management strategies, we effectively managed the extensive computational and memory demands of our experiments. This approach allowed us to handle the large-scale data and complex computations necessary for our study.

\subsubsection{Hyper-parameters}

\begin{table}[H]
\centering
\begin{tabular}{ll}
\toprule
\textbf{Hyperparameter} & \textbf{Value} \\
\midrule
Optimizer & Adam \\
Learning rate (Q-value and policy networks) & \(1 \times 10^{-4}\) \\
Tau (\(\tau\)) & 0.005 \\
Gamma (\(\gamma\)) & 0.99 \\
Batch size & 128 \\
Agent hidden dimension & 256 \\
Mixer hidden dimension & 64 \\
Number of seeds & 5 \\
Number of episodes per evaluation step & 32 \\
Number of evaluation steps & 100 \\
Lambda scale (\(\lambda\)) & 1.0 \\
Alpha (\(\alpha\)) & 10 \\
f-divergence & soft-$\chi^2$ \\
\bottomrule
\end{tabular}
\caption{Hyperparameters for our algorithm}
\label{hyperparameters}
\end{table}

In our study, we developed two versions of our algorithm: a continuous version for MaMujoco using Gaussian distributions (torch.distributions.Normal), and a discrete version for SMACv1 and SMACv2 using Categorical distributions (torch.distributions.Categorical). In the discrete setting, action probabilities are computed using softmax over available actions only, ensuring zero probability for unavailable actions, which enhances the accuracy of log likelihood calculations. Key hyperparameters are listed on the Table \ref{hyperparameters}. Experiments were conducted with 5 seeds, 32 episodes per evaluation step, and 100 evaluation steps.

\subsection{Additional Experimental Details}
We evaluate the performance of our ComaDICE algorithm using two key metrics: mean and standard deviation (std) of returns and winrates. Returns measure the average rewards accumulated by agents, calculated across five random seeds to ensure robustness, while winrates, applicable only to competitive environments like SMACv1 and SMACv2, indicate the success rate against other agents. For cooperative settings such as MaMujoco, winrates are not applicable. We also include figures showing evaluation curves, highlighting how each method's performance evolves during training with offline datasets. These metrics and visualizations provide a comprehensive overview of our algorithm's effectiveness and consistency in various MARL tasks.

\subsubsection{Returns}

Tables \ref{SMACv1:return}, \ref{SMACv2:return}, and \ref{MaMujoco:return} present the returns from our experimental results across the SMACv1, SMACv2, and Multi-Agent MuJoCo environments, highlighting the performance of our proposed algorithm, ComaDICE, alongside baseline methods such as BC, BCQ, CQL, ICQ, OMAR, and OMIGA. Our results demonstrate that ComaDICE consistently achieves superior returns, particularly excelling in more complex difficulty tasks. Figures \ref{SMACv1:curves:returns}, \ref{SMACv2:curves:returns}, and \ref{MaMujoco:curves:returns} illustrate the learning curves for these algorithms, showing that ComaDICE not only outperforms other algorithms in terms of mean returns but also exhibits lower standard deviation, indicating robust and stable performance. This suggests that ComaDICE effectively handles distributional shifts in offline settings. These findings underscore our algorithm's adaptability and effectiveness in diverse multi-agent coordination scenarios, setting a new benchmark in offline MARL.

\begin{table}[H]
\small
\centering
\resizebox{\textwidth}{!}{
\begin{tabular}{cc|ccccccc}
\toprule
\multicolumn{2}{c|}{\textbf{Instances}} & \textbf{BC} & \textbf{BCQ} & \textbf{CQL} & \textbf{ICQ} & \textbf{OMAR} & \textbf{OMIGA} & \textbf{ComaDICE} \\
\midrule
\multirow{3}{*}{2c\_vs\_64zg} & poor & 11.6±0.4 & 12.5±0.2 & 10.8±0.5 & 12.6±0.2 & 11.3±0.5 & \red{13.0±0.7} & 12.1±0.5 \\
 & medium & 13.4±1.9 & 15.6±0.4 & 12.8±1.6 & 15.6±0.6 & 10.2±0.2 & 16.0±0.2 & \red{16.3±0.7} \\
 & good & 17.9±1.3 & 19.1±0.3 & 18.5±1.0 & 18.8±0.2 & 17.3±0.8 & 19.1±0.3 & \red{20.3±0.1} \\
\midrule
\multirow{3}{*}{5m\_vs\_6m} & poor & 7.0±0.5 & 7.6±0.4 & 7.4±0.1 & 7.3±0.2 & 7.3±0.4 & 7.5±0.2 & \red{8.1±0.5} \\
 & medium & 7.0±0.8 & 7.6±0.1 & 7.8±0.1 & 7.8±0.3 & 7.1±0.5 & 7.9±0.6 & \red{8.7±0.4} \\
 & good & 7.0±0.5 & 7.8±0.1 & 8.1±0.2 & 7.9±0.3 & 7.4±0.6 & 8.3±0.4 & \red{8.7±0.5} \\
\midrule
\multirow{3}{*}{6h\_vs\_8z} & poor & 8.6±0.8 & 10.8±0.2 & 10.8±0.5 & 10.6±0.1 & 10.6±0.2 & 11.3±0.2 & \red{11.4±0.6} \\
 & medium & 9.5±0.3 & 11.8±0.2 & 11.3±0.3 & 11.1±0.3 & 10.4±0.2 & 12.2±0.2 & \red{12.8±0.2} \\
 & good & 10.0±1.7 & 12.2±0.2 & 10.4±0.2 & 11.8±0.1 & 9.9±0.3 & 12.5±0.2 & \red{13.1±0.5} \\
\midrule
\multirow{3}{*}{corridor} & poor & 2.9±0.6 & 4.5±0.9 & 4.1±0.6 & 4.5±0.3 & 4.3±0.5 & 5.6±0.3 & \red{6.4±0.5} \\
 & medium & 7.4±0.8 & 10.8±0.9 & 7.0±0.7 & 11.3±1.6 & 7.3±0.7 & 11.7±1.3 & \red{12.9±0.6} \\
 & good & 10.8±2.6 & 15.2±1.2 & 5.2±0.8 & 15.5±1.1 & 6.7±0.7 & 15.9±0.9 & \red{18.0±0.1} \\
\bottomrule
\end{tabular}
}
\caption{Comparison of average returns for ComaDICE and baselines on SMACv1 benchmarks.}
\label{SMACv1:return}
\end{table}

\begin{figure}[H]\small
\small
\centering
\showlegend[0.8]{10}{7}
\\
\showinstance[0.16]{0}{2c_vs_64zg_poor_returns}{2c\_vs\_64zg \\ \emph{poor}}
\showinstance[0.15]{12}{2c_vs_64zg_medium_returns}{2c\_vs\_64zg \\ \emph{medium}}
\showinstance[0.15]{12}{2c_vs_64zg_good_returns}{2c\_vs\_64zg \\ \emph{good}}
\showinstance[0.16]{0}{5m_vs_6m_poor_returns}{5m\_vs\_6m \\ \emph{poor}}
\showinstance[0.15]{8}{5m_vs_6m_medium_returns}{5m\_vs\_6m \\ \emph{medium}}
\showinstance[0.15]{8}{5m_vs_6m_good_returns}{5m\_vs\_6m \\ \emph{good}}
\\
\showinstance[0.16]{0}{6h_vs_8z_poor_returns}{6h\_vs\_8z \\ \emph{poor}}
\showinstance[0.15]{12}{6h_vs_8z_medium_returns}{6h\_vs\_8z \\ \emph{medium}}
\showinstance[0.15]{12}{6h_vs_8z_good_returns}{6h\_vs\_8z \\ \emph{good}}
\showinstance[0.16]{0}{corridor_poor_returns}{corridor \\ \emph{poor}}
\showinstance[0.15]{12}{corridor_medium_returns}{corridor \\ \emph{medium}}
\showinstance[0.15]{12}{corridor_good_returns}{corridor \\ \emph{good}}
\\
\caption{Evaluation of SMACv1 tasks comparing the returns achieved by ComaDICE and baselines.}
\label{SMACv1:curves:returns}
\end{figure}

\begin{table}[H]
\small
\centering
\begin{tabular}{cc|ccccccc}
\toprule
\multicolumn{2}{c|}{\textbf{Instances}} & \textbf{BC} & \textbf{BCQ} & \textbf{CQL} & \textbf{ICQ} & \textbf{OMAR} & \textbf{OMIGA} & \textbf{ComaDICE} \\
\midrule
\multirow{5}{*}{Protoss} & 5\_vs\_5 & 13.2±0.7 & 6.8±1.6 & 9.3±1.6 & 10.7±1.2 & 8.9±0.8 & 14.3±1.4 & \red{14.4±1.1} \\
 & 10\_vs\_10 & 12.0±1.9 & 7.7±1.3 & 11.3±0.9 & 10.4±1.6 & 8.8±0.6 & 14.2±1.5 & \red{14.6±1.8} \\
 & 10\_vs\_11 & 11.2±0.5 & 5.2±1.4 & 7.9±0.8 & 10.3±0.7 & 8.0±0.3 & 12.1±0.5 & \red{13.2±0.9} \\
 & 20\_vs\_20 & 13.1±0.5 & 4.8±0.6 & 10.5±0.9 & 11.8±0.5 & 9.1±0.5 & 14.0±0.9 & \red{14.8±1.0} \\
 & 20\_vs\_23 & 11.2±0.5 & 3.5±0.6 & 5.6±0.7 & 10.2±0.7 & 7.4±0.7 & 13.0±1.1 & \red{13.3±0.9} \\
\midrule
\multirow{5}{*}{Terran} & 5\_vs\_5 & 10.8±1.4 & 6.4±1.1 & 6.5±0.9 & 6.8±0.6 & 6.9±0.6 & 10.5±1.2 & \red{10.7±1.5} \\
 & 10\_vs\_10 & 10.3±0.3 & 4.6±0.4 & 6.8±0.6 & 8.7±1.4 & 7.6±1.0 & 10.1±0.6 & \red{11.8±0.9} \\
 & 10\_vs\_11 & 9.0±0.7 & 3.6±1.1 & 5.5±0.2 & 5.5±0.9 & 5.9±0.7 & 8.8±1.4 & \red{9.4±0.9} \\
 & 20\_vs\_20 & 10.8±0.8 & 3.9±0.6 & 4.3±0.6 & 8.3±0.3 & 7.3±0.4 & 10.5±0.7 & \red{11.8±0.5} \\
 & 20\_vs\_23 & 7.2±1.0 & 1.2±1.0 & 1.6±0.2 & 5.3±0.5 & 5.1±0.3 & 7.9±0.6 & \red{8.2±0.7} \\
\midrule
\multirow{5}{*}{Zerg} & 5\_vs\_5 & 10.5±2.2 & 6.6±0.2 & 6.7±0.5 & 6.5±0.9 & 7.7±0.9 & 8.9±1.1 & \red{10.7±2.0} \\
 & 10\_vs\_10 & 11.0±0.8 & 7.3±1.0 & 7.2±0.3 & 7.7±1.1 & 7.5±0.8 & \red{11.8±1.6} & 11.5±1.0 \\
 & 10\_vs\_11 & 9.2±1.1 & 7.6±0.9 & 6.7±0.4 & 6.8±1.0 & 6.5±1.0 & 9.5±1.2 & \red{11.0±0.9} \\
 & 20\_vs\_20 & 9.3±0.5 & 3.7±0.4 & 4.7±0.3 & 6.9±0.5 & 6.9±0.8 & 9.2±0.5 & \red{9.4±1.2} \\
 & 20\_vs\_23 & 8.5±0.7 & 3.3±0.3 & 4.1±0.6 & 6.9±0.5 & 5.7±0.4 & 9.8±0.6 & \red{10.5±0.8} \\
\bottomrule
\end{tabular}
\caption{Comparison of average returns for ComaDICE and baselines on SMACv2 tasks.}
\label{SMACv2:return}
\end{table}

\begin{figure}[H]\small
\small
\centering
\showlegend[0.8]{10}{7}
\\
\showinstance[0.17]{0}{protoss_5_vs_5_returns}{protoss \\ \emph{5\_vs\_5}}
\showinstance[0.15]{12}{protoss_10_vs_10_returns}{protoss \\ \emph{10\_vs\_10}}
\showinstance[0.15]{12}{protoss_10_vs_11_returns}{protoss \\ \emph{10\_vs\_11}}
\showinstance[0.15]{12}{protoss_20_vs_20_returns}{protoss \\ \emph{20\_vs\_20}}
\showinstance[0.15]{12}{protoss_20_vs_23_returns}{protoss \\ \emph{20\_vs\_23}}
\\
\showinstance[0.17]{0}{terran_5_vs_5_returns}{terran \\ \emph{5\_vs\_5}}
\showinstance[0.15]{12}{terran_10_vs_10_returns}{terran \\ \emph{10\_vs\_10}}
\showinstance[0.15]{12}{terran_10_vs_11_returns}{terran \\ \emph{10\_vs\_11}}
\showinstance[0.15]{12}{terran_20_vs_20_returns}{terran \\ \emph{20\_vs\_20}}
\showinstance[0.15]{12}{terran_20_vs_23_returns}{terran \\ \emph{20\_vs\_23}}
\\
\showinstance[0.17]{0}{zerg_5_vs_5_returns}{zerg \\ \emph{5\_vs\_5}}
\showinstance[0.15]{12}{zerg_10_vs_10_returns}{zerg \\ \emph{10\_vs\_10}}
\showinstance[0.15]{12}{zerg_10_vs_11_returns}{zerg \\ \emph{10\_vs\_11}}
\showinstance[0.15]{12}{zerg_20_vs_20_returns}{zerg \\ \emph{20\_vs\_20}}
\showinstance[0.15]{12}{zerg_20_vs_23_returns}{zerg \\ \emph{20\_vs\_23}}
\\
\caption{Evaluation of SMACv2 tasks comparing the returns achieved by ComaDICE and baselines.}
\label{SMACv2:curves:returns}
\end{figure}

\begin{table}[H]
\small
\centering
\begin{tabular}{cc|cccc}
\toprule
\multicolumn{2}{c|}{\textbf{Instances}} & \textbf{expert} & \textbf{medium} & \textbf{m-replay} & \textbf{m-expert} \\
\midrule
\multirow{7}{*}{Hopper} & {BC} & 209.8±191.1 & 511.9±7.4 & 133.3±53.5 & 155.3±111.5 \\
 & {BCQ} & 77.9±58.0 & 44.6±20.6 & 26.5±24.0 & 54.3±23.7 \\
 & {CQL} & 159.1±313.8 & 401.3±199.9 & 31.4±15.2 & 64.8±123.3 \\
 & {ICQ} & 754.7±806.3 & 501.8±14.0 & 195.4±103.6 & 355.4±373.9 \\
 & {OMAR} & 2.4±1.5 & 21.3±24.9 & 3.3±3.2 & 1.4±0.9 \\
 & {OMIGA} & 859.6±709.5 & 1189.3±544.3 & 774.2±494.3 & 709.0±595.7 \\
 & {ComaDICE} & 2827.7±62.9 & 822.6±66.2 & 906.3±242.1 & 1362.4±522.9 \\
\midrule
\multirow{7}{*}{Ant} & {BC} & 2046.3±6.2 & 1421.1±7.9 & 994.0±20.3 & 1561.7±64.8 \\
 & {BCQ} & 1317.7±286.3 & 1059.6±91.2 & 950.8±48.8 & 1020.9±242.7 \\
 & {CQL} & 1042.4±2021.6 & 533.9±1766.4 & 234.6±1618.3 & 800.2±1621.5 \\
 & {ICQ} & 2050.0±11.9 & 1412.4±10.9 & 1016.7±53.5 & 1590.2±85.6 \\
 & {OMAR} & 312.5±297.5 & -1710.0±1589.0 & -2014.2±844.7 & -2992.8±7.0 \\
 & {OMIGA} & 2055.5±1.6 & 1418.4±5.4 & 1105.1±88.9 & 1720.3±110.6 \\
 & {ComaDICE} & 2056.9±5.9 & 1425.0±2.9 & 1122.9±61.0 & 1813.9±68.4 \\
\midrule
\multirow{7}{*}{\shortstack{Half\\Cheetah}} & {BC} & 3251.2±386.8 & 2280.3±178.2 & 1886.2±390.8 & 2451.9±783.0 \\
 & {BCQ} & 2992.7±629.7 & 2590.5±1110.4 & -333.6±152.1 & 3543.7±780.9 \\
 & {CQL} & 1189.5±1034.5 & 1011.3±1016.9 & 1998.7±693.9 & 1194.2±1081.0 \\
 & {ICQ} & 2955.9±459.2 & 2549.3±96.3 & 1922.4±612.9 & 2834.0±420.3 \\
 & {OMAR} & -206.7±161.1 & -265.7±147.0 & -235.4±154.9 & -253.8±63.9 \\
 & {OMIGA} & 3383.6±552.7 & 3608.1±237.4 & 2504.7±83.5 & 2948.5±518.9 \\
 & {ComaDICE} & 4082.9±45.7 & 2664.7±54.2 & 2855.0±242.2 & 3889.7±81.6 \\
\bottomrule
\end{tabular}
\caption{Comparison of average returns for ComaDICE and baselines on MaMujoco benchmarks.}
\label{MaMujoco:return}
\end{table}

\begin{figure}[H]
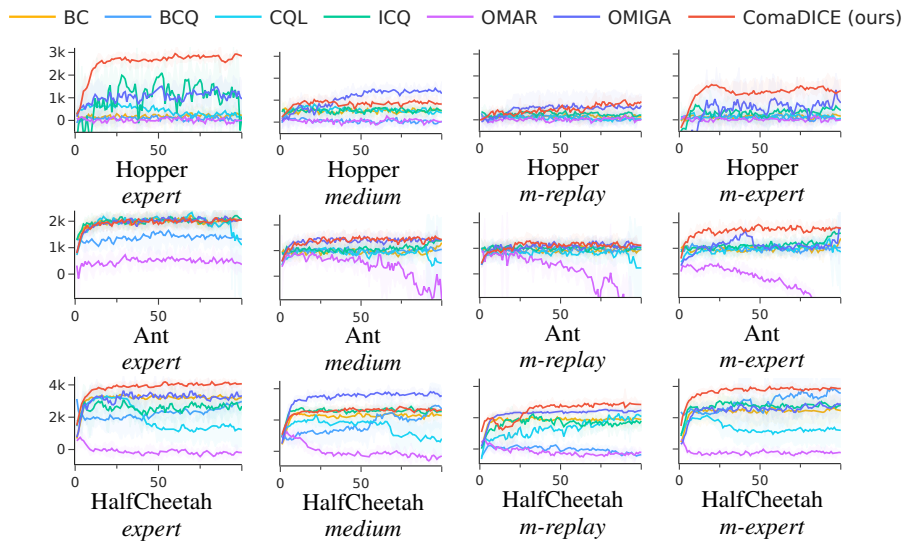
\small
\small
\centering
\showlegend[0.8]{10}{7}
\\
\showinstance[0.17]{0}{Hopper-v2_expert_returns}{Hopper \\ \emph{expert}}
\showinstance[0.15]{12}{Hopper-v2_medium_returns}{Hopper \\ \emph{medium}}
\showinstance[0.15]{12}{Hopper-v2_medium-replay_returns}{Hopper \\ \emph{m-replay}}
\showinstance[0.15]{12}{Hopper-v2_medium-expert_returns}{Hopper \\ \emph{m-expert}}
\\
\showinstance[0.17]{0}{Ant-v2_expert_returns}{Ant \\ \emph{expert}}
\showinstance[0.15]{12}{Ant-v2_medium_returns}{Ant \\ \emph{medium}}
\showinstance[0.15]{12}{Ant-v2_medium-replay_returns}{Ant \\ \emph{m-replay}}
\showinstance[0.15]{12}{Ant-v2_medium-expert_returns}{Ant \\ \emph{m-expert}}
\\
\showinstance[0.17]{0}{HalfCheetah-v2_expert_returns}{HalfCheetah \\ \emph{expert}}
\showinstance[0.15]{12}{HalfCheetah-v2_medium_returns}{HalfCheetah \\ \emph{medium}}
\showinstance[0.15]{12}{HalfCheetah-v2_medium-replay_returns}{HalfCheetah \\ \emph{m-replay}}
\showinstance[0.15]{12}{HalfCheetah-v2_medium-expert_returns}{HalfCheetah \\ \emph{m-expert}}
\\
\caption{Evaluation of MaMujoco tasks comparing the returns achieved by ComaDICE and baselines.}
\label{MaMujoco:curves:returns}
\end{figure}

\subsubsection{Winrates}

In this section, we analyze the winrates of our ComaDICE algorithm across various multi-agent reinforcement learning scenarios. Winrates are crucial in competitive environments like SMACv1 and SMACv2, as they measure the algorithm's success against other agents. Our results demonstrate that ComaDICE consistently achieves higher winrates compared to baseline methods. Notably, ComaDICE performs well across both simple and complex tasks, reflecting its robustness and adaptability. As shown in Tables \ref{SMACv1:winrate} and \ref{SMACv2:winrate}, as well as Figures \ref{SMACv1:curves:winrates} and \ref{SMACv2:curves:winrates}, ComaDICE not only excels in average winrates but also exhibits lower variance, indicating stable performance across different trials. These findings highlight ComaDICE's ability to effectively manage distributional shifts.

\begin{table}[H]
\small
\centering
\resizebox{\textwidth}{!}{
\begin{tabular}{cc|ccccccc}
\toprule
\multicolumn{2}{c|}{\textbf{Instances}} & \textbf{BC} & \textbf{BCQ} & \textbf{CQL} & \textbf{ICQ} & \textbf{OMAR} & \textbf{OMIGA} & \textbf{ComaDICE} \\
\midrule
\multirow{3}{*}{2c\_vs\_64zg} & poor & 0.0±0.0 & 0.0±0.0 & 0.0±0.0 & 0.0±0.0 & 0.0±0.0 & 0.0±0.0 & \red{0.6±1.3} \\
 & medium & 1.9±1.5 & 2.5±3.6 & 2.5±3.6 & 1.9±1.5 & 1.2±1.5 & 6.2±5.6 & \red{8.8±7.0} \\
 & good & 31.2±9.9 & 35.6±8.8 & 44.4±13.0 & 28.7±4.6 & 28.7±9.1 & 40.6±9.5 & \red{55.0±1.5} \\
\midrule
\multirow{3}{*}{5m\_vs\_6m} & poor & 2.5±1.3 & 1.2±1.5 & 1.2±1.5 & 1.2±1.5 & 0.6±1.2 & \red{6.9±1.2} & 4.4±4.2 \\
 & medium & 1.9±1.5 & 1.2±1.5 & 2.5±1.2 & 1.2±1.5 & 0.6±1.2 & 2.5±3.1 & \red{7.5±2.5} \\
 & good & 2.5±2.3 & 1.9±2.5 & 1.9±1.5 & 3.8±2.3 & 3.8±1.2 & 6.9±1.2 & \red{8.1±3.2} \\
\midrule
\multirow{3}{*}{6h\_vs\_8z} & poor & 0.0±0.0 & 0.0±0.0 & 0.0±0.0 & 0.0±0.0 & 0.0±0.0 & 0.0±0.0 & \red{1.9±3.8} \\
 & medium & 1.9±1.5 & 1.9±1.5 & 1.9±1.5 & 2.5±1.2 & 1.9±1.5 & 1.2±1.5 & \red{3.1±2.0} \\
 & good & 8.8±1.2 & 8.8±3.6 & 7.5±1.5 & 9.4±2.0 & 0.6±1.3 & 5.6±3.6 & \red{11.2±5.4} \\
\midrule
\multirow{3}{*}{corridor} & poor & 0.0±0.0 & 0.0±0.0 & 0.0±0.0 & \red{0.6±1.3} & 0.0±0.0 & 0.0±0.0 & \red{0.6±1.3} \\
 & medium & 15.0±2.3 & 23.1±1.5 & 14.4±1.5 & 22.5±3.1 & 11.9±2.3 & 23.8±5.1 & \red{27.3±3.4} \\
 & good & 30.6±4.1 & 42.5±6.4 & 5.6±1.2 & 42.5±6.4 & 3.1±0.0 & 41.9±6.4 & \red{48.8±2.5} \\
\bottomrule
\end{tabular}
}
\caption{Comparison of average winrates for ComaDICE and baselines on SMACv1 benchmarks.}
\label{SMACv1:winrate}
\end{table}

\begin{figure}[H]\small
\small
\centering
\showlegend[0.8]{10}{7}
\\
\showinstance[0.17]{0}{2c_vs_64zg_poor_winrates}{2c\_vs\_64zg \\ \emph{poor}}
\showinstance[0.14]{20}{2c_vs_64zg_medium_winrates}{2c\_vs\_64zg \\ \emph{medium}}
\showinstance[0.14]{20}{2c_vs_64zg_good_winrates}{2c\_vs\_64zg \\ \emph{good}}
\showinstance[0.17]{0}{5m_vs_6m_poor_winrates}{5m\_vs\_6m \\ \emph{poor}}
\showinstance[0.14]{20}{5m_vs_6m_medium_winrates}{5m\_vs\_6m \\ \emph{medium}}
\showinstance[0.14]{20}{5m_vs_6m_good_winrates}{5m\_vs\_6m \\ \emph{good}}
\\
\showinstance[0.17]{0}{6h_vs_8z_poor_winrates}{6h\_vs\_8z \\ \emph{poor}}
\showinstance[0.14]{20}{6h_vs_8z_medium_winrates}{6h\_vs\_8z \\ \emph{medium}}
\showinstance[0.14]{20}{6h_vs_8z_good_winrates}{6h\_vs\_8z \\ \emph{good}}
\showinstance[0.17]{0}{corridor_poor_winrates}{corridor \\ \emph{poor}}
\showinstance[0.14]{20}{corridor_medium_winrates}{corridor \\ \emph{medium}}
\showinstance[0.14]{20}{corridor_good_winrates}{corridor \\ \emph{good}}
\\
\caption{Evaluation of SMACv1 tasks comparing the winrates achieved by ComaDICE and baselines.}
\label{SMACv1:curves:winrates}
\end{figure}

\begin{table}[H]
\small
\centering
\resizebox{\textwidth}{!}{
\begin{tabular}{cc|ccccccc}
\toprule
\multicolumn{2}{c|}{\textbf{Instances}} & \textbf{BC} & \textbf{BCQ} & \textbf{CQL} & \textbf{ICQ} & \textbf{OMAR} & \textbf{OMIGA} & \textbf{ComaDICE} \\
\midrule
\multirow{5}{*}{Protoss} & 5\_vs\_5 & 36.9±8.7 & 16.2±2.3 & 10.0±4.1 & 36.9±9.1 & 21.2±4.1 & 33.1±5.4 & \red{46.2±6.1} \\
 & 10\_vs\_10 & 36.2±10.6 & 9.4±5.6 & 26.2±7.6 & 28.1±6.6 & 13.8±7.0 & 40.0±10.7 & \red{50.6±8.7} \\
 & 10\_vs\_11 & 19.4±4.6 & 10.0±4.1 & 10.6±5.4 & 12.5±4.4 & 12.5±3.4 & 16.2±6.1 & \red{20.0±4.2} \\
 & 20\_vs\_20 & 37.5±4.4 & 6.2±2.0 & 11.9±4.1 & 32.5±8.1 & 23.8±2.5 & 36.2±5.1 & \red{47.5±7.8} \\
 & 20\_vs\_23 & 13.8±1.5 & 1.2±1.5 & 0.0±0.0 & 12.5±5.6 & 11.2±7.8 & 12.5±8.1 & \red{13.8±5.8} \\
\midrule
\multirow{5}{*}{Terran} & 5\_vs\_5 & 30.0±4.2 & 12.5±6.2 & 9.4±7.9 & 23.1±5.8 & 14.4±4.7 & 28.1±4.4 & \red{30.6±8.2} \\
 & 10\_vs\_10 & 29.4±5.8 & 6.9±6.1 & 9.4±5.6 & 16.9±5.8 & 15.0±4.6 & 29.4±3.2 & \red{32.5±5.8} \\
 & 10\_vs\_11 & 16.2±3.6 & 3.8±4.6 & 7.5±6.4 & 5.0±4.2 & 9.4±5.6 & 12.5±5.2 & \red{19.4±5.4} \\
 & 20\_vs\_20 & 26.2±10.4 & 5.0±3.2 & 10.6±4.2 & 15.6±3.4 & 7.5±7.3 & 21.9±4.4 & \red{29.4±3.8} \\
 & 20\_vs\_23 & 4.4±4.2 & 0.0±0.0 & 0.0±0.0 & 7.5±6.1 & 5.0±4.2 & 4.4±2.5 & \red{9.4±5.2} \\
\midrule
\multirow{5}{*}{Zerg} & 5\_vs\_5 & 26.9±10.0 & 14.4±4.2 & 14.4±5.8 & 18.8±7.1 & 13.8±6.1 & 21.9±5.9 & \red{31.2±7.7} \\
 & 10\_vs\_10 & 25.0±2.8 & 5.6±4.6 & 5.6±4.6 & 15.6±7.4 & 19.4±2.3 & 23.8±6.4 & \red{33.8±11.8} \\
 & 10\_vs\_11 & 13.8±4.7 & 9.4±5.2 & 6.2±4.4 & 10.6±6.7 & 10.6±3.8 & 13.8±6.7 & \red{19.4±3.6} \\
 & 20\_vs\_20 & 8.1±1.5 & 2.5±1.2 & 1.2±1.5 & 10.0±7.8 & \red{12.5±4.4} & 10.0±2.3 & 9.4±6.2 \\
 & 20\_vs\_23 & 7.5±3.2 & 0.6±1.3 & 1.2±1.5 & 7.5±3.2 & 3.8±2.3 & 4.4±4.2 & \red{11.2±4.2} \\
\bottomrule
\end{tabular}
}
\caption{Comparison of average winrates for ComaDICE and baselines on SMACv2 benchmarks.}
\label{SMACv2:winrate}
\end{table}

\begin{figure}[H]
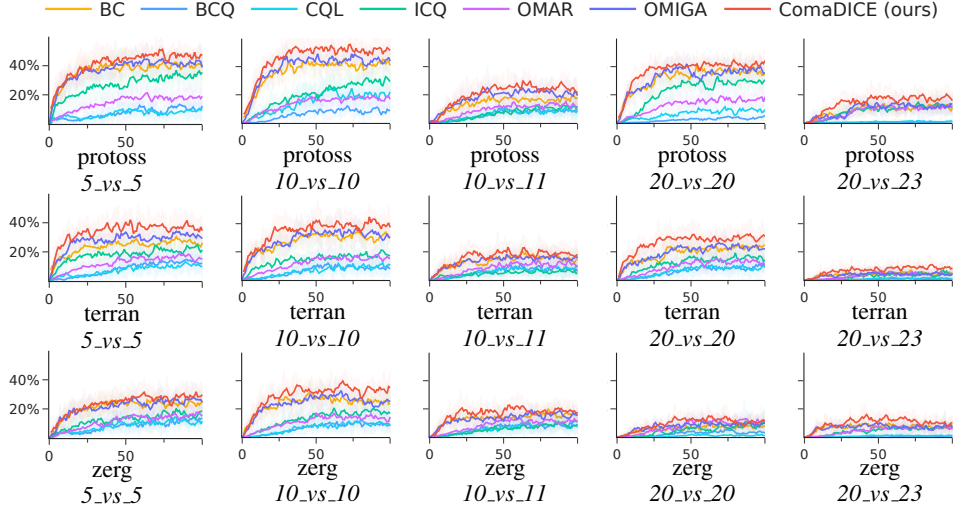
\small
\small
\centering
\showlegend[0.8]{10}{7}
\\
\showinstance[0.17]{0}{protoss_5_vs_5_winrates}{protoss \\ \emph{5\_vs\_5}}
\showinstance[0.14]{20}{protoss_10_vs_10_winrates}{protoss \\ \emph{10\_vs\_10}}
\showinstance[0.14]{20}{protoss_10_vs_11_winrates}{protoss \\ \emph{10\_vs\_11}}
\showinstance[0.14]{20}{protoss_20_vs_20_winrates}{protoss \\ \emph{20\_vs\_20}}
\showinstance[0.14]{20}{protoss_20_vs_23_winrates}{protoss \\ \emph{20\_vs\_23}}
\\
\showinstance[0.17]{0}{terran_5_vs_5_winrates}{terran \\ \emph{5\_vs\_5}}
\showinstance[0.14]{20}{terran_10_vs_10_winrates}{terran \\ \emph{10\_vs\_10}}
\showinstance[0.14]{20}{terran_10_vs_11_winrates}{terran \\ \emph{10\_vs\_11}}
\showinstance[0.14]{20}{terran_20_vs_20_winrates}{terran \\ \emph{20\_vs\_20}}
\showinstance[0.14]{20}{terran_20_vs_23_winrates}{terran \\ \emph{20\_vs\_23}}
\\
\showinstance[0.17]{0}{zerg_5_vs_5_winrates}{zerg \\ \emph{5\_vs\_5}}
\showinstance[0.14]{20}{zerg_10_vs_10_winrates}{zerg \\ \emph{10\_vs\_10}}
\showinstance[0.14]{20}{zerg_10_vs_11_winrates}{zerg \\ \emph{10\_vs\_11}}
\showinstance[0.14]{20}{zerg_20_vs_20_winrates}{zerg \\ \emph{20\_vs\_20}}
\showinstance[0.14]{20}{zerg_20_vs_23_winrates}{zerg \\ \emph{20\_vs\_23}}
\\
\caption{Evaluation of SMACv2 tasks comparing the winrates achieved by ComaDICE and baselines.}
\label{SMACv2:curves:winrates}
\end{figure}

\subsection{Ablation Study: Different Values of Alpha}
We provide more experimental details  for ablation study assessing  the impact of varying the regularization parameter alpha ($\alpha$) on the performance of our ComaDICE.

\subsubsection{Returns}
Our results, in Tables \ref{SMACv1:alpha:return}, \ref{SMACv2:alpha:return}, and \ref{MaMujoco:alpha:return}, show that the performance of ComaDICE is sensitive to the choice of $\alpha$. Lower values of $\alpha$ tend to prioritize imitation learning, leading to suboptimal performance in terms of returns, whereas higher values facilitate better adaptation to the offline data, achieving superior returns. Notably, an $\alpha$ value of 10 consistently yielded the best results across most tasks, indicating an optimal balance between exploration and exploitation in offline settings. This ablation study underscores the importance of selecting an appropriate $\alpha$ to enhance the algorithm's robustness and effectiveness in handling distributional shifts in offline multi-agent reinforcement learning scenarios.

\begin{table}[H]
\small
\centering
\begin{tabular}{cc|ccccc}
\toprule
\multicolumn{2}{c|}{\textbf{Instances}} & $\alpha=0.01$ & $\alpha=0.1$ & $\alpha=1$ & $\alpha=10$ & $\alpha=100$ \\
\midrule
\multirow{3}{*}{2c\_vs\_64zg} & poor & 10.6±0.5 & 11.1±0.4 & 11.1±0.1 & 12.1±0.5 & 11.8±0.2 \\
 & medium & 9.6±0.5 & 13.1±0.8 & 12.5±2.4 & 16.3±0.7 & 16.0±0.3 \\
 & good & 11.1±1.4 & 9.6±2.7 & 17.4±0.5 & 20.3±0.1 & 19.9±0.1 \\
\midrule
\multirow{3}{*}{5m\_vs\_6m} & poor & 5.7±0.1 & 5.1±0.3 & 7.1±0.7 & 8.1±0.5 & 7.7±0.3 \\
 & medium & 5.6±0.1 & 5.3±0.2 & 7.8±0.8 & 8.7±0.4 & 8.5±0.7 \\
 & good & 5.7±0.1 & 5.7±0.2 & 7.8±0.5 & 8.7±0.5 & 8.8±0.8 \\
\midrule
\multirow{3}{*}{6h\_vs\_8z} & poor & 8.5±0.2 & 9.6±0.3 & 10.0±0.3 & 11.4±0.6 & 10.7±0.4 \\
 & medium & 8.5±0.6 & 10.5±0.8 & 10.7±0.5 & 12.8±0.2 & 12.3±0.3 \\
 & good & 7.9±0.1 & 9.5±0.6 & 11.3±0.6 & 13.1±0.5 & 12.8±0.4 \\
\midrule
\multirow{3}{*}{corridor} & poor & 2.1±0.4 & 3.7±1.0 & 6.1±0.8 & 6.4±0.5 & 5.0±1.1 \\
 & medium & 1.7±1.0 & 2.2±1.7 & 11.3±0.3 & 12.9±0.6 & 13.3±0.1 \\
 & good & 4.7±2.4 & 3.8±5.0 & 15.7±0.3 & 18.0±0.1 & 17.4±0.1 \\
\bottomrule
\end{tabular}
\caption{Impact of alpha on returns for ComaDICE and baselines in SMACv1.}
\label{SMACv1:alpha:return}
\end{table}

\begin{table}[H]
\small
\centering
\begin{tabular}{cc|ccccc}
\toprule
\multicolumn{2}{c|}{\textbf{Instances}} & $\alpha=0.01$ & $\alpha=0.1$ & $\alpha=1$ & $\alpha=10$ & $\alpha=100$ \\
\midrule
\multirow{5}{*}{Protoss} & 5\_vs\_5 & 12.2±1.0 & 13.1±1.3 & 13.2±1.1 & 14.4±1.1 & 14.0±2.0 \\
 & 10\_vs\_10 & 12.8±0.9 & 14.0±0.8 & 13.4±1.2 & 14.6±1.8 & 14.1±1.3 \\
 & 10\_vs\_11 & 9.9±1.1 & 11.1±0.8 & 11.3±1.2 & 13.2±0.9 & 12.2±1.1 \\
 & 20\_vs\_20 & 10.3±0.5 & 11.1±1.0 & 12.2±0.9 & 14.8±1.0 & 13.2±0.4 \\
 & 20\_vs\_23 & 8.0±2.3 & 11.2±1.2 & 11.7±0.6 & 13.3±0.9 & 13.2±0.5 \\
\midrule
\multirow{5}{*}{Terran} & 5\_vs\_5 & 11.1±1.8 & 10.1±1.2 & 9.0±1.0 & 10.7±1.5 & 12.6±1.9 \\
 & 10\_vs\_10 & 8.5±0.8 & 10.3±0.7 & 10.4±1.1 & 11.8±0.9 & 11.8±1.7 \\
 & 10\_vs\_11 & 7.5±0.7 & 8.6±2.1 & 8.5±1.6 & 9.4±0.9 & 9.6±0.9 \\
 & 20\_vs\_20 & 6.2±1.1 & 6.4±1.7 & 9.1±0.7 & 11.8±0.5 & 9.3±0.6 \\
 & 20\_vs\_23 & 5.5±1.1 & 6.5±1.6 & 6.5±0.8 & 8.2±0.7 & 8.2±0.4 \\
\midrule
\multirow{5}{*}{Zerg} & 5\_vs\_5 & 7.9±0.6 & 9.3±0.9 & 10.5±1.4 & 10.7±2.0 & 10.4±1.2 \\
 & 10\_vs\_10 & 10.9±1.5 & 11.4±1.5 & 11.8±0.7 & 11.5±1.0 & 10.9±2.2 \\
 & 10\_vs\_11 & 10.1±2.5 & 9.1±1.2 & 10.0±1.2 & 11.0±0.9 & 9.8±0.8 \\
 & 20\_vs\_20 & 8.0±0.5 & 9.2±1.3 & 9.2±1.0 & 9.4±1.2 & 10.5±0.9 \\
 & 20\_vs\_23 & 9.1±1.1 & 10.0±0.7 & 10.4±0.6 & 10.5±0.8 & 10.1±0.7 \\
\bottomrule
\end{tabular}
\caption{Impact of alpha on returns for ComaDICE and baselines in SMACv2.}
\label{SMACv2:alpha:return}
\end{table}

\begin{table}[H]
\small
\centering
\resizebox{\textwidth}{!}{
\begin{tabular}{cc|ccccc}
\toprule
\multicolumn{2}{c|}{\textbf{Instances}} & $\alpha=0.01$ & $\alpha=0.1$ & $\alpha=1$ & $\alpha=10$ & $\alpha=100$ \\
\midrule
\multirow{4}{*}{Hopper} & expert & 147.3±67.9 & 107.9±65.5 & 545.7±820.6 & 2827.7±62.9 & 2690.7±58.6 \\
 & medium & 149.6±96.8 & 107.5±66.9 & 244.7±267.5 & 822.6±66.2 & 807.5±122.2 \\
 & m-replay & 165.6±104.1 & 109.6±38.7 & 155.6±61.6 & 906.3±242.1 & 186.5±16.8 \\
 & m-expert & 119.1±77.1 & 95.6±69.5 & 58.8±26.1 & 1362.4±522.9 & 1358.4±595.1 \\
\midrule
\multirow{4}{*}{Ant} & expert & 1016.4±196.5 & 1179.0±273.7 & 1927.7±174.1 & 2056.9±5.9 & 1950.0±3.3 \\
 & medium & 907.3±32.2 & 1000.0±90.4 & 1424.3±3.1 & 1425.0±2.9 & 1354.6±2.5 \\
 & m-replay & 969.1±21.9 & 978.4±39.6 & 944.6±28.9 & 1122.9±61.0 & 1072.1±41.4 \\
 & m-expert & 915.8±364.1 & 1132.9±282.2 & 738.5±250.2 & 1813.9±68.4 & 1559.6±86.8 \\
\midrule
\multirow{4}{*}{\shortstack{Half\\Cheetah}} & expert & 1068.9±635.2 & 935.2±905.9 & 3637.0±80.9 & 4082.9±45.7 & 3843.7±149.4 \\
 & medium & 575.9±724.8 & 445.2±403.9 & 2690.0±92.4 & 2664.7±54.2 & 2523.4±59.0 \\
 & m-replay & 412.3±310.5 & 233.5±270.1 & 861.6±173.5 & 2855.0±242.2 & 2557.4±241.5 \\
 & m-expert & -107.5±298.1 & -275.9±544.5 & 1136.9±1608.3 & 3889.7±81.6 & 3605.6±70.4 \\
\bottomrule
\end{tabular}
}
\caption{Impact of alpha on returns for ComaDICE and baselines in MaMujoco.}
\label{MaMujoco:alpha:return}
\end{table}

\begin{figure}[H]
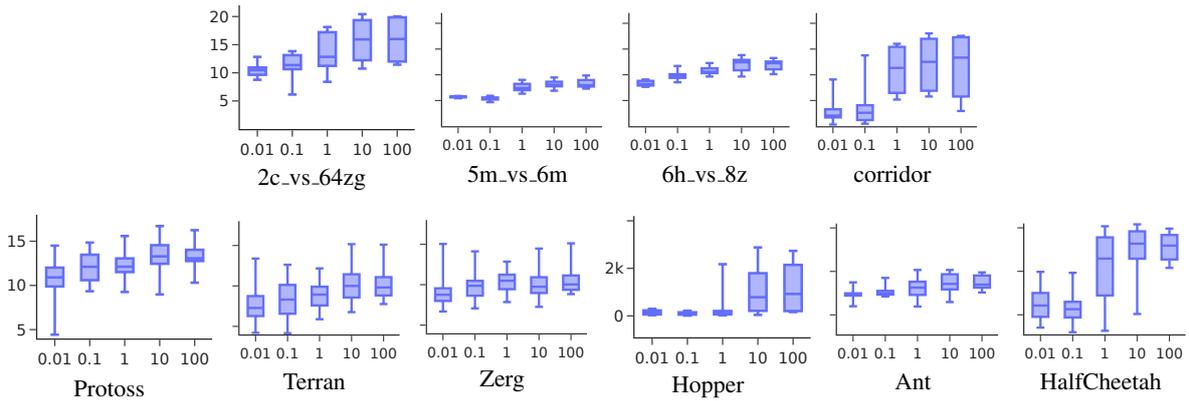
\small
\small
\centering
\showinstance[0.17]{0}{2c_vs_64zg_box_returns}{2c\_vs\_64zg}
\showinstance[0.14]{12}{5m_vs_6m_box_returns}{5m\_vs\_6m}
\showinstance[0.14]{12}{6h_vs_8z_box_returns}{6h\_vs\_8z}
\showinstance[0.14]{12}{corridor_box_returns}{corridor}
\vspace{8pt}
\\
\showinstance[0.17]{0}{protoss_box_returns}{Protoss}
\showinstance[0.14]{12}{terran_box_returns}{Terran}
\showinstance[0.14]{12}{zerg_box_returns}{Zerg}
\showinstance[0.17]{0}{Hopper-v2_box_returns}{Hopper}
\showinstance[0.14]{12}{Ant-v2_box_returns}{Ant}
\showinstance[0.14]{12}{HalfCheetah-v2_box_returns}{HalfCheetah}
\caption{Impact of alpha on returns for ComaDICE and baselines.}
\label{boxes:returns}
\end{figure}

\subsubsection{Winrates}

In the A.4.2 section of the appendix, we investigate the impact of varying $\alpha$ on winrates across different multi-agent reinforcement learning environments. We observe that an intermediate $\alpha$ value of 10 consistently yields optimal results, suggesting it strikes an effective balance between conservative policy adherence and exploration of the offline dataset. This section underscores the importance of fine-tuning $\alpha$ to enhance the robustness and efficacy of the ComaDICE algorithm in managing distributional shifts within competitive multi-agent settings.

\begin{table}[H]
\small
\centering
\begin{tabular}{cc|ccccc}
\toprule
\multicolumn{2}{c|}{\textbf{Instances}} & $\alpha=0.01$ & $\alpha=0.1$ & $\alpha=1$ & $\alpha=10$ & $\alpha=100$ \\
\midrule
\multirow{3}{*}{2c\_vs\_64zg} & poor & 0.0±0.0 & 0.0±0.0 & 0.0±0.0 & 0.6±1.3 & 0.6±1.3 \\
 & medium & 0.0±0.0 & 1.9±3.8 & 5.0±5.1 & 8.8±7.0 & 8.8±4.6 \\
 & good & 0.6±1.2 & 0.0±0.0 & 40.6±4.0 & 55.0±1.5 & 51.9±1.5 \\
\midrule
\multirow{3}{*}{5m\_vs\_6m} & poor & 0.0±0.0 & 0.0±0.0 & 4.4±4.7 & 4.4±4.2 & 1.9±1.5 \\
 & medium & 0.0±0.0 & 0.0±0.0 & 8.1±6.4 & 7.5±2.5 & 7.5±3.8 \\
 & good & 0.0±0.0 & 0.0±0.0 & 6.2±4.4 & 8.1±3.2 & 10.0±6.1 \\
\midrule
\multirow{3}{*}{6h\_vs\_8z} & poor & 0.0±0.0 & 0.0±0.0 & 1.9±3.8 & 1.9±3.8 & 0.6±1.3 \\
 & medium & 0.0±0.0 & 0.6±1.3 & 1.9±1.5 & 3.1±2.0 & 3.1±2.0 \\
 & good & 0.0±0.0 & 0.0±0.0 & 7.5±5.8 & 11.2±5.4 & 7.5±7.3 \\
\midrule
\multirow{3}{*}{corridor} & poor & 0.0±0.0 & 0.6±1.2 & 0.0±0.0 & 0.6±1.3 & 1.2±1.5 \\
 & medium & 0.0±0.0 & 0.0±0.0 & 30.0±5.1 & 27.3±3.4 & 34.4±2.8 \\
 & good & 0.0±0.0 & 4.4±8.8 & 48.8±4.7 & 48.8±2.5 & 49.4±3.6 \\
\bottomrule
\end{tabular}
\caption{Impact of alpha on winrates for ComaDICE and baselines in SMACv1.}
\end{table}

\begin{table}[H]
\small
\centering
\begin{tabular}{cc|ccccc}
\toprule
\multicolumn{2}{c|}{\textbf{Instances}} & $\alpha=0.01$ & $\alpha=0.1$ & $\alpha=1$ & $\alpha=10$ & $\alpha=100$ \\
\midrule
\multirow{5}{*}{Protoss} & 5\_vs\_5 & 20.6±10.0 & 31.9±6.1 & 50.0±2.8 & 46.2±6.1 & 46.2±8.5 \\
 & 10\_vs\_10 & 19.4±6.1 & 25.0±3.4 & 45.0±11.1 & 50.6±8.7 & 51.2±7.6 \\
 & 10\_vs\_11 & 0.0±0.0 & 6.2±9.7 & 18.8±8.1 & 20.0±4.2 & 29.4±8.3 \\
 & 20\_vs\_20 & 1.2±1.5 & 8.8±7.8 & 28.1±8.6 & 47.5±7.8 & 40.6±6.2 \\
 & 20\_vs\_23 & 0.0±0.0 & 1.9±2.5 & 9.4±6.6 & 13.8±5.8 & 17.5±5.1 \\
\midrule
\multirow{5}{*}{Terran} & 5\_vs\_5 & 25.6±4.6 & 22.5±7.2 & 30.6±4.1 & 30.6±8.2 & 41.2±4.6 \\
 & 10\_vs\_10 & 15.0±8.7 & 28.7±7.2 & 33.8±9.4 & 32.5±5.8 & 43.8±7.1 \\
 & 10\_vs\_11 & 3.8±2.3 & 13.8±9.2 & 14.4±9.2 & 19.4±5.4 & 16.2±10.3 \\
 & 20\_vs\_20 & 0.6±1.2 & 2.5±3.6 & 18.8±2.0 & 29.4±3.8 & 21.9±3.4 \\
 & 20\_vs\_23 & 0.6±1.3 & 2.5±3.6 & 2.5±3.6 & 9.4±5.2 & 6.2±2.0 \\
\midrule
\multirow{5}{*}{Zerg} & 5\_vs\_5 & 10.0±4.6 & 20.0±5.8 & 28.7±4.6 & 31.2±7.7 & 25.0±8.6 \\
 & 10\_vs\_10 & 13.8±9.0 & 20.6±8.3 & 29.4±9.0 & 33.8±11.8 & 31.9±6.7 \\
 & 10\_vs\_11 & 9.4±9.5 & 12.5±6.8 & 16.9±3.2 & 19.4±3.6 & 17.5±9.2 \\
 & 20\_vs\_20 & 0.0±0.0 & 1.9±1.5 & 6.9±6.1 & 9.4±6.2 & 12.5±4.0 \\
 & 20\_vs\_23 & 1.2±1.5 & 3.8±2.3 & 12.5±4.0 & 11.2±4.2 & 11.9±6.1 \\
\bottomrule
\end{tabular}
\caption{Impact of alpha on winrates for ComaDICE and baselines in SMACv2.}
\end{table}

\begin{figure}[H]
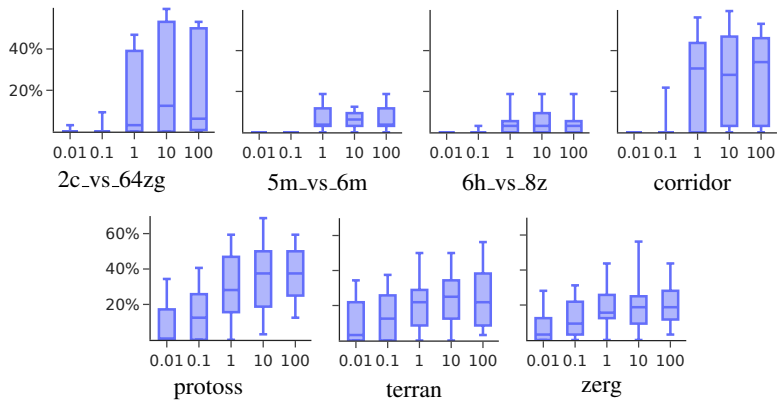
\small
\small
\centering
\showinstance[0.17]{0}{2c_vs_64zg_box_winrates}{2c\_vs\_64zg}
\showinstance[0.14]{20}{5m_vs_6m_box_winrates}{5m\_vs\_6m}
\showinstance[0.14]{20}{6h_vs_8z_box_winrates}{6h\_vs\_8z}
\showinstance[0.14]{20}{corridor_box_winrates}{corridor}
\vspace{8pt}
\\
\showinstance[0.17]{0}{protoss_box_winrates}{protoss}
\showinstance[0.14]{20}{terran_box_winrates}{terran}
\showinstance[0.14]{20}{zerg_box_winrates}{zerg}
\caption{Impact of alpha on winrates for ComaDICE and baselines.}
\label{boxes:Winrates}
\end{figure}

\subsection{Ablation Study: Different Forms of f-divergence}
We conduct an ablation study to examine the effects of different functions of $f$-divergence on the performance of our ComaDICE algorithm across various multi-agent reinforcement learning environments. The study specifically evaluates three types of $f$-divergence: Kullback-Leibler (KL), $\chi^2$, and $\text{Soft-}\chi^2$ .  

\paragraph{KL-Divergence:} This is a well-known measure of how one probability distribution diverges from a second, expected probability distribution. It is defined as:
   \[
   f_{\text{KL}}(x) = x \log x - x + 1
   \]
   The corresponding inverse derivative, which is used in optimization, is:
   \[
   (f'_{\text{KL}})^{-1}(x) = \exp(x - 1)
   \]
   KL-divergence can lead to numerical instability due to the exponential function, especially when the values become large.

\paragraph{\(\chi^2\)-Divergence:} This divergence measures the difference between two probability distributions by considering the square of the differences. It is expressed as:
   \[
   f_{\chi^2}(x) = \frac{1}{2}(x - 1)^2
   \]
   The inverse derivative is:
   \[
   (f'_{\chi^2})^{-1}(x) = x + 1
   \]
   While this function avoids the exponential instability seen in KL-divergence, it may suffer from zero gradients for negative values, which can slow down or halt training.

\paragraph{Soft-\(\chi^2\) Divergence:} This function combines the forms of KL and \(\chi^2\) divergences to mitigate both numerical instability and the dying gradient problem. It is defined piecewise as:
   \[
   f_{\text{Soft-}\chi^2}(x) =
   \begin{cases} 
   x \log x - x + 1 & \text{if } 0 < x < 1 \\
   \frac{1}{2}(x - 1)^2 & \text{if } x \geq 1 
   \end{cases}
   \]
   The inverse derivative is:
   \[
   (f'_{\text{Soft-}\chi^2})^{-1}(x) =
   \begin{cases} 
   \exp(x) & \text{if } x < 0 \\
   x + 1 & \text{if } x \geq 0 
   \end{cases}
   \]
   This choice provides a stable optimization process by maintaining non-zero gradients and avoiding large exponential values, making it suitable for reinforcement learning tasks.

We assess their impact on both returns and winrates in environments such as SMACv1, SMACv2, and MaMujoco. Our results, detailed in Tables \ref{tab:f-divergence-1}-\ref{tab:f-divergence-5}, reveal that the choice of $f$-divergence function significantly influences the algorithm's effectiveness. For instance, the $\text{Soft-}\chi^2$ divergence consistently yields superior returns and competitive winrates across most scenarios, suggesting its robustness in managing distributional shifts in offline settings. Conversely, while $\text{Soft-}\chi^2$ divergence also performs well, particularly in environments with higher complexity, KL divergence shows varying results, indicating its sensitivity to specific task dynamics. This comprehensive analysis underscores the importance of selecting an appropriate $f$-divergence function to optimize ComaDICE's performance in diverse multi-agent reinforcement learning contexts.

\subsubsection{Returns}
\begin{table}[H]
\small
\centering
\begin{tabular}{cc|ccc}
\toprule
\multicolumn{2}{c|}{\textbf{Instances}} & $f_{\chi^2}(x)$ & $f_{\text{KL}}(x)$ & $f_{\text{Soft-}\chi^2}(x)$ \\
\midrule
\multirow{3}{*}{2c\_vs\_64zg} & poor & 11.6±0.2 & 11.1±0.3 & 12.1±0.5 \\
 & medium & 16.1±0.6 & 15.7±0.3 & 16.3±0.7 \\
 & good & 19.7±0.1 & 19.3±0.1 & 20.3±0.1 \\
\midrule
\multirow{3}{*}{5m\_vs\_6m} & poor & 7.8±0.4 & 7.5±0.5 & 8.1±0.5 \\
 & medium & 8.1±0.5 & 7.7±0.4 & 8.7±0.4 \\
 & good & 8.7±0.6 & 8.1±0.4 & 8.7±0.5 \\
\midrule
\multirow{3}{*}{6h\_vs\_8z} & poor & 10.5±0.3 & 10.0±0.2 & 11.4±0.6 \\
 & medium & 12.9±0.4 & 12.4±0.5 & 12.8±0.2 \\
 & good & 12.7±0.4 & 12.4±0.5 & 13.1±0.5 \\
\midrule
\multirow{3}{*}{corridor} & poor & 6.5±0.5 & 6.1±0.4 & 6.4±0.5 \\
 & medium & 12.7±0.7 & 12.0±0.7 & 12.9±0.6 \\
 & good & 17.3±0.1 & 16.9±0.1 & 18.0±0.1 \\
\bottomrule
\end{tabular}
\caption{Impact of $f$-divergence on returns for ComaDICE and baselines in SMACv1.}
\label{tab:f-divergence-1}
\end{table}

\begin{table}[H]
\small
\centering
\begin{tabular}{cc|ccc}
\toprule
\multicolumn{2}{c|}{\textbf{Instances}} & $f_{\chi^2}(x)$ & $f_{\text{KL}}(x)$ & $f_{\text{Soft-}\chi^2}(x)$ \\
\midrule
\multirow{5}{*}{Protoss} & 5\_vs\_5 & 14.6±0.5 & 13.6±0.9 & 14.4±1.1 \\
 & 10\_vs\_10 & 14.7±1.3 & 13.7±1.6 & 14.6±1.8 \\
 & 10\_vs\_11 & 12.8±1.0 & 11.4±1.7 & 13.2±0.9 \\
 & 20\_vs\_20 & 12.7±0.3 & 13.1±0.7 & 14.8±1.0 \\
 & 20\_vs\_23 & 12.4±0.9 & 12.5±0.7 & 13.3±0.9 \\
\midrule
\multirow{5}{*}{Terran} & 5\_vs\_5 & 11.1±1.2 & 12.7±2.0 & 10.7±1.5 \\
 & 10\_vs\_10 & 9.8±0.9 & 10.7±1.3 & 11.8±0.9 \\
 & 10\_vs\_11 & 8.9±0.8 & 8.9±1.0 & 9.4±0.9 \\
 & 20\_vs\_20 & 10.5±0.5 & 10.2±0.7 & 11.8±0.5 \\
 & 20\_vs\_23 & 8.2±0.4 & 7.4±0.7 & 8.2±0.7 \\
\midrule
\multirow{5}{*}{Zerg} & 5\_vs\_5 & 10.0±0.8 & 9.6±1.5 & 10.7±2.0 \\
 & 10\_vs\_10 & 12.4±1.2 & 10.3±1.1 & 11.5±1.0 \\
 & 10\_vs\_11 & 8.9±0.4 & 9.1±1.1 & 11.0±0.9 \\
 & 20\_vs\_20 & 9.0±0.8 & 9.0±0.6 & 9.4±1.2 \\
 & 20\_vs\_23 & 10.2±1.0 & 9.3±0.8 & 10.5±0.8 \\
\bottomrule
\end{tabular}
\caption{Impact of $f$-divergence on returns for ComaDICE and baselines in SMACv2.}
\label{tab:f-divergence-2}
\end{table}

\begin{table}[H]
\small
\centering
\begin{tabular}{cc|ccc}
\toprule
\multicolumn{2}{c|}{\textbf{Instances}} & $f_{\chi^2}(x)$ & $f_{\text{KL}}(x)$ & $f_{\text{Soft-}\chi^2}(x)$ \\
\midrule
\multirow{4}{*}{Hopper} & expert & 2625.0±191.3 & 2018.7±972.0 & 2827.7±62.9 \\
 & medium & 794.4±69.2 & 295.5±227.1 & 822.6±66.2 \\
 & m-replay & 221.3±58.0 & 129.9±55.0 & 906.3±242.1 \\
 & m-expert & 1294.1±520.4 & 105.5±103.9 & 1362.4±522.9 \\
\midrule
\multirow{4}{*}{Ant} & expert & 1945.2±2.8 & 1884.1±27.8 & 2056.9±5.9 \\
 & medium & 1359.2±3.2 & 1346.2±49.8 & 1425.0±2.9 \\
 & m-replay & 1111.1±57.8 & 987.5±33.9 & 1122.9±61.0 \\
 & m-expert & 1655.9±42.8 & 1182.5±405.1 & 1813.9±68.4 \\
\midrule
\multirow{4}{*}{\shortstack{Half\\Cheetah}} & expert & 3860.6±91.5 & 3830.0±88.8 & 4082.9±45.7 \\
 & medium & 2532.3±81.9 & 2347.8±171.8 & 2664.7±54.2 \\
 & m-replay & 2729.9±241.5 & 1258.5±1015.4 & 2855.0±242.2 \\
 & m-expert & 3665.2±74.0 & 3601.0±155.6 & 3889.7±81.6 \\
\bottomrule
\end{tabular}
\caption{Impact of $f$-divergence on returns for ComaDICE and baselines in MaMujoco.}
\label{tab:f-divergence-3}
\end{table}

\subsubsection{Winrates}

\begin{table}[H]
\small
\centering
\begin{tabular}{cc|ccc}
\toprule
\multicolumn{2}{c|}{\textbf{Instances}} & $f_{\chi^2}(x)$ & $f_{\text{KL}}(x)$ & $f_{\text{Soft-}\chi^2}(x)$ \\
\midrule
\multirow{3}{*}{2c\_vs\_64zg} & poor & 0.0±0.0 & 0.0±0.0 & 0.6±1.3 \\
 & medium & 13.1±4.6 & 10.6±3.8 & 8.8±7.0 \\
 & good & 55.6±3.1 & 54.4±1.5 & 55.0±1.5 \\
\midrule
\multirow{3}{*}{5m\_vs\_6m} & poor & 3.8±3.1 & 3.8±3.6 & 4.4±4.2 \\
 & medium & 6.2±2.8 & 5.0±3.8 & 7.5±2.5 \\
 & good & 8.8±3.6 & 6.9±3.1 & 8.1±3.2 \\
\midrule
\multirow{3}{*}{6h\_vs\_8z} & poor & 0.0±0.0 & 0.0±0.0 & 1.9±3.8 \\
 & medium & 5.0±2.5 & 5.0±3.8 & 3.1±2.0 \\
 & good & 9.4±4.4 & 9.4±2.0 & 11.2±5.4 \\
\midrule
\multirow{3}{*}{corridor} & poor & 1.2±1.5 & 1.2±1.5 & 0.6±1.3 \\
 & medium & 31.2±6.2 & 28.1±5.9 & 27.3±3.4 \\
 & good & 49.4±5.4 & 48.1±1.5 & 48.8±2.5 \\
\bottomrule
\end{tabular}
\caption{Impact of $f$-divergence on winrates for ComaDICE and baselines in SMACv1.}
\label{tab:f-divergence-4}
\end{table}

\begin{table}[H]
\small
\centering
\begin{tabular}{cc|ccc}
\toprule
\multicolumn{2}{c|}{\textbf{Instances}} & $f_{\chi^2}(x)$ & $f_{\text{KL}}(x)$ & $f_{\text{Soft-}\chi^2}(x)$ \\
\midrule
\multirow{5}{*}{Protoss} & 5\_vs\_5 & 52.5±4.1 & 46.2±7.2 & 46.2±6.1 \\
 & 10\_vs\_10 & 48.1±7.6 & 55.0±9.8 & 50.6±8.7 \\
 & 10\_vs\_11 & 22.5±8.7 & 20.6±6.1 & 20.0±4.2 \\
 & 20\_vs\_20 & 38.1±2.3 & 41.2±7.8 & 47.5±7.8 \\
 & 20\_vs\_23 & 16.9±4.2 & 15.0±3.6 & 13.8±5.8 \\
\midrule
\multirow{5}{*}{Terran} & 5\_vs\_5 & 41.2±7.2 & 38.8±10.6 & 30.6±8.2 \\
 & 10\_vs\_10 & 30.6±4.1 & 36.2±10.8 & 32.5±5.8 \\
 & 10\_vs\_11 & 15.6±11.5 & 15.0±7.5 & 19.4±5.4 \\
 & 20\_vs\_20 & 33.8±6.4 & 28.7±11.8 & 29.4±3.8 \\
 & 20\_vs\_23 & 5.6±4.1 & 8.1±4.2 & 9.4±5.2 \\
\midrule
\multirow{5}{*}{Zerg} & 5\_vs\_5 & 29.4±9.0 & 33.1±13.3 & 31.2±7.7 \\
 & 10\_vs\_10 & 31.2±7.7 & 26.2±5.1 & 33.8±11.8 \\
 & 10\_vs\_11 & 11.2±1.5 & 16.2±7.2 & 19.4±3.6 \\
 & 20\_vs\_20 & 7.5±3.2 & 11.2±7.0 & 9.4±6.2 \\
 & 20\_vs\_23 & 10.6±3.2 & 10.0±2.3 & 11.2±4.2 \\
\bottomrule
\end{tabular}
\caption{Impact of $f$-divergence on winrates for ComaDICE and baselines in SMACv2.}
\label{tab:f-divergence-5}
\end{table}

\subsection{Ablation Study: Different Types of Mixer Network}

In this section, we explore the impact of using different types of mixer networks within the ComaDICE algorithm. We introduce two  settings for the mixer network within the ComaDICE algorithm: 1-layer and 2-layer settings. The mixer network plays a crucial role in aggregating local value functions into a global value function, which is essential for effective policy optimization in multi-agent reinforcement learning (MARL) settings. By examining various mixer network architectures, we aim to understand how these configurations affect the performance and stability of the ComaDICE algorithm.
The comparisons are presented in Tables \ref{tab:mixer-1}-\ref{tab:mixer-5}, showing both average returns and win rates. The results clearly demonstrate that the 1-layer configuration performs better, providing more stable training outcomes across nearly all tasks. This contradicts the findings in many prior online MARL studies ~\citep{rashid2020monotonic,son2019qtran,wang2020qplex}, which may be due to over-fitting issues in offline learning.

\subsubsection{Returns}
\begin{table}[H]
\small
\centering
\begin{tabular}{c|c|c|c}
\toprule
\multicolumn{2}{c|}{\multirow{2}{*}{\textbf{Instances}}} & \multicolumn{2}{c}{\textbf{ComaDICE} (ours)} \\ \multicolumn{2}{c|}{} & 1-layer & 2-layer \\
\midrule
\multirow{3}{*}{2c\_vs\_64zg} & poor & 12.1±0.5 & 11.5±0.9 \\
 & medium & 16.3±0.7 & 11.2±0.8 \\
 & good & 20.3±0.1 & 9.0±2.2 \\
\midrule
\multirow{3}{*}{5m\_vs\_6m} & poor & 8.1±0.5 & 3.8±1.1 \\
 & medium & 8.7±0.4 & 0.8±0.3 \\
 & good & 8.7±0.5 & 7.7±0.1 \\
\midrule
\multirow{3}{*}{6h\_vs\_8z} & poor & 11.4±0.6 & 10.3±0.3 \\
 & medium & 12.8±0.2 & 9.1±0.6 \\
 & good & 13.1±0.5 & 8.3±0.5 \\
\midrule
\multirow{3}{*}{corridor} & poor & 6.4±0.5 & 1.5±0.7 \\
 & medium & 12.9±0.6 & 3.9±1.7 \\
 & good & 18.0±0.1 & 2.6±2.3 \\
\bottomrule
\end{tabular}
\caption{Average returns for ComaDICE and baselines on SMACv1 with different mixer settings.}
\label{tab:mixer-1}
\end{table}

\begin{table}[H]
\small
\centering
\begin{tabular}{c|c|c|c}
\toprule
\multicolumn{2}{c|}{\multirow{2}{*}{\textbf{Instances}}} & \multicolumn{2}{c}{\textbf{ComaDICE} (ours)} \\ \multicolumn{2}{c|}{} & 1-layer & 2-layer \\
\midrule
\multirow{5}{*}{Protoss} & 5\_vs\_5 & 14.4±1.1 & 10.5±1.4 \\
 & 10\_vs\_10 & 14.6±1.8 & 11.2±1.6 \\
 & 10\_vs\_11 & 13.2±0.9 & 9.5±0.4 \\
 & 20\_vs\_20 & 14.8±1.0 & 9.5±0.9 \\
 & 20\_vs\_23 & 13.3±0.9 & 7.1±2.2 \\
\midrule
\multirow{5}{*}{Terran} & 5\_vs\_5 & 10.7±1.5 & 8.3±0.8 \\
 & 10\_vs\_10 & 11.8±0.9 & 8.8±1.1 \\
 & 10\_vs\_11 & 9.4±0.9 & 6.4±1.2 \\
 & 20\_vs\_20 & 11.8±0.5 & 7.8±0.9 \\
 & 20\_vs\_23 & 8.2±0.7 & 6.6±0.9 \\
\midrule
\multirow{5}{*}{Zerg} & 5\_vs\_5 & 10.7±2.0 & 7.8±1.1 \\
 & 10\_vs\_10 & 11.5±1.0 & 9.7±0.6 \\
 & 10\_vs\_11 & 11.0±0.9 & 7.9±0.7 \\
 & 20\_vs\_20 & 9.4±1.2 & 7.8±0.6 \\
 & 20\_vs\_23 & 10.5±0.8 & 8.0±0.5 \\
\bottomrule
\end{tabular}
\caption{Average returns for ComaDICE and baselines on SMACv2 with different mixer settings.}
\label{tab:mixer-2}
\end{table}

\begin{table}[H]
\small
\centering
\begin{tabular}{c|c|c|c}
\toprule
\multicolumn{2}{c|}{\multirow{2}{*}{\textbf{Instances}}} & \multicolumn{2}{c}{\textbf{ComaDICE} (ours)} \\ \multicolumn{2}{c|}{} & 1-layer & 2-layer \\
\midrule
\multirow{4}{*}{Hopper} & expert & 2827.7±62.9 & 483.7±349.7 \\
 & medium & 822.6±66.2 & 648.4±245.9 \\
 & m-replay & 906.3±242.1 & 441.9±260.8 \\
 & m-expert & 1362.4±522.9 & 402.3±288.2 \\
\midrule
\multirow{4}{*}{Ant} & expert & 2056.9±5.9 & 1583.0±160.4 \\
 & medium & 1425.0±2.9 & 1198.9±53.9 \\
 & m-replay & 1122.9±61.0 & 1041.8±38.4 \\
 & m-expert & 1813.9±68.4 & 1426.6±171.4 \\
\midrule
\multirow{4}{*}{\shortstack{Half\\Cheetah}} & expert & 4082.9±45.7 & 2159.4±658.0 \\
 & medium & 2664.7±54.2 & 2026.7±244.3 \\
 & m-replay & 2855.0±242.2 & 1299.2±196.1 \\
 & m-expert & 3889.7±81.6 & 1336.3±381.9 \\
\bottomrule
\end{tabular}
\caption{Average returns for ComaDICE and baselines on MaMujoco with different mixer settings.}
\label{tab:mixer-3}
\end{table}

\subsubsection{Winrates}
\begin{table}[H]
\small
\centering
\begin{tabular}{c|c|c|c}
\toprule
\multicolumn{2}{c|}{\multirow{2}{*}{\textbf{Instances}}} & \multicolumn{2}{c}{\textbf{ComaDICE} (ours)} \\ \multicolumn{2}{c|}{} & 1-layer & 2-layer \\
\midrule
\multirow{3}{*}{2c\_vs\_64zg} & poor & 0.6±1.3 & 0.0±0.0 \\
 & medium & 8.8±7.0 & 3.8±3.6 \\
 & good & 55.0±1.5 & 19.4±5.0 \\
\midrule
\multirow{3}{*}{5m\_vs\_6m} & poor & 4.4±4.2 & 3.1±0.0 \\
 & medium & 7.5±2.5 & 1.2±1.5 \\
 & good & 8.1±3.2 & 3.1±0.0 \\
\midrule
\multirow{3}{*}{6h\_vs\_8z} & poor & 1.9±3.8 & 0.0±0.0 \\
 & medium & 3.1±2.0 & 0.0±0.0 \\
 & good & 11.2±5.4 & 1.9±2.5 \\
\midrule
\multirow{3}{*}{corridor} & poor & 0.6±1.3 & 0.0±0.0 \\
 & medium & 27.3±3.4 & 11.2±2.5 \\
 & good & 48.8±2.5 & 23.1±8.1 \\
\bottomrule
\end{tabular}
\caption{Average winrates for ComaDICE and baselines on SMACv1 with different mixer settings.}
\label{tab:mixer-4}
\end{table}

\begin{table}[H]
\small
\centering
\begin{tabular}{c|c|c|c}
\toprule
\multicolumn{2}{c|}{\multirow{2}{*}{\textbf{Instances}}} & \multicolumn{2}{c}{\textbf{ComaDICE} (ours)} \\ \multicolumn{2}{c|}{} & 1-layer & 2-layer \\
\midrule
\multirow{5}{*}{Protoss} & 5\_vs\_5 & 46.2±6.1 & 31.9±3.6 \\
 & 10\_vs\_10 & 50.6±8.7 & 32.5±5.8 \\
 & 10\_vs\_11 & 20.0±4.2 & 10.6±7.3 \\
 & 20\_vs\_20 & 47.5±7.8 & 21.9±4.0 \\
 & 20\_vs\_23 & 13.8±5.8 & 6.9±5.4 \\
\midrule
\multirow{5}{*}{Terran} & 5\_vs\_5 & 30.6±8.2 & 25.6±4.6 \\
 & 10\_vs\_10 & 32.5±5.8 & 28.1±3.4 \\
 & 10\_vs\_11 & 19.4±5.4 & 12.5±4.0 \\
 & 20\_vs\_20 & 29.4±3.8 & 11.2±3.2 \\
 & 20\_vs\_23 & 9.4±5.2 & 3.1±2.0 \\
\midrule
\multirow{5}{*}{Zerg} & 5\_vs\_5 & 31.2±7.7 & 20.6±4.7 \\
 & 10\_vs\_10 & 33.8±11.8 & 21.2±7.2 \\
 & 10\_vs\_11 & 19.4±3.6 & 13.1±4.1 \\
 & 20\_vs\_20 & 9.4±6.2 & 5.6±1.3 \\
 & 20\_vs\_23 & 11.2±4.2 & 3.1±3.4 \\
\bottomrule
\end{tabular}
\caption{Average winrates for ComaDICE and baselines on SMACv2 with different mixer settings.}
\label{tab:mixer-5}
\end{table}

\end{document}

\section{Problem formulation}
From~\cite{jang2024safedice}, We have two datasets:
\begin{itemize}
    \item bad dataset: set of demonstrations that are non-preferred in accumulated cost.
    \item mixed dataset: a large dataset with a varying cost.
\end{itemize}
The target of this paper is trying avoid the bad dataset from the mixed dataset.

Moreover, here, although our objective still trying to maximize the return, our main target still trying to avoid the non-preferred demonstrations.

\section{Method}
We directly learn from the mix dataset, trying to assign \highlight{low} reward for transition belong to the bad one while \highlight{higher} for the others. We learn this one by using a reference reward function $\bar{r}(s,a)$.

\begin{align}
    &\max_{Q} ~~  \bbE_{\rho^{mix
    }} [\bar{r}(s,a)*r^Q(s,a)] - \bbE_{\rho^\pi} [r^Q(s,a)] + \phi(r^Q),
\end{align}
 where \begin{align}
     r^Q_{\pi}(s,a) &= Q(s,a) - \gamma \bbE_{s'}[V^\pi(s')],\\
     V^{\pi}(s) &= \log\left[\bbE_{a\sim d(.|s)}[e^{Q(s,a)}]\right],
 \end{align}
and $\phi(r^Q)$ is the regularizer. Here, we do not seeking for out-distribution actions, leading to no need the entropy contribution in the $V^\pi(s)$ calculation. \highlight{Moreover, please note that, here, we only learn negative reward.}

\subsection{Estimate reference reward}
To get a weight function, we learning a discriminator $d(s,a)$ which provide $1$ for the bad and $0$ for the mix dataset:
\begin{align}
    \min_d ~~ -\bbE_\rho^{bad}[\log(d(s,a))] -\bbE_\rho^{mix}[\log(1-d(s,a))] + \bbE_\rho^{bad}[\log(1-d(s,a))]
\end{align}

Here, the third term is trying to avoid the bad samples in the mixed dataset reduce to 0~\cite{xu2022discriminator}.

We then use the discriminator to calculate the reference reward for each transition and use it as $\bar{r}(s,a)$:
\begin{align}
    \bar{r}(s,a) = \log(1-d(s,a)),
\end{align}
This reference reward is negative which is in $[-3,0]$ due to d(s,a) clipped in range $[0.05,0.95]$.

\subsection{learning policy}
We want to learning the policy based on the collected Q function. In the offline setting, BC-based learning approach provide higher stability:
\begin{align}
    \min_{\pi} - W(s,a) *\log\pi(s,a),
\end{align}
where W(s,a) is the weight function for Weighted BC which have been widely use in offline imitation learning from supplementary dataset~\cite{xu2022discriminator,kim2022demodice,jang2024safedice}.

Here, our weight function are learned based on the combination of the Q function and the discriminator:

\begin{align}
    W(s,a) = exp((Q(s,a) - V(s))/\alpha)
\end{align}